\theoremstyle{plain}
\newtheorem{theorem}{Theorem}[section]
\newtheorem{proposition}[theorem]{Proposition}
\newtheorem{lemma}[theorem]{Lemma}
\theoremstyle{definition}
\newtheorem{definition}[theorem]{Definition}
\theoremstyle{remark}
\begin{document}

\twocolumn[
\icmltitle{Robust Concept Erasure in Diffusion Models: A Theoretical Perspective on Security and Robustness}

%
%
%
%
%
%




\begin{icmlauthorlist}
\icmlauthor{Zixuan Fu}{1}
\icmlauthor{Yan Ren}{2}
\icmlauthor{Finn Carter}{3}
\icmlauthor{Chenyue Wen}{1}
\icmlauthor{Le Ku}{2}
\icmlauthor{Daheng Yu}{3}
\icmlauthor{Emily Davis}{1}
\icmlauthor{Bo Zhang}{2}

\end{icmlauthorlist}

\begin{icmlauthorlist}
	{$^1$NTU}
	{$^2$Xidian University}
	{$^3$SDU}
\end{icmlauthorlist} 

%

\icmlkeywords{Machine Learning, ICML}

\vskip 0.3in
]




\begin{abstract}
	Diffusion models have achieved unprecedented success in image generation but pose increasing risks in terms of privacy, fairness, and security. A growing demand exists to \emph{erase} sensitive or harmful concepts (e.g., NSFW content, private individuals, artistic styles) from these models while preserving their overall generative capabilities. We introduce \textbf{SCORE} (Secure and Concept-Oriented Robust Erasure), a novel framework for robust concept removal in diffusion models. SCORE formulates concept erasure as an \emph{adversarial independence} problem, theoretically guaranteeing that the model's outputs become statistically independent of the erased concept. Unlike prior heuristic methods, SCORE minimizes the mutual information between a target concept and generated outputs, yielding provable erasure guarantees. We provide formal proofs establishing convergence properties and derive upper bounds on residual concept leakage. Empirically, we evaluate SCORE on Stable Diffusion and FLUX across four challenging benchmarks: object erasure, NSFW removal, celebrity face suppression, and artistic style unlearning. SCORE consistently outperforms state-of-the-art methods including EraseAnything, ANT, MACE, ESD, and UCE, achieving up to \textbf{12.5\%} higher erasure efficacy while maintaining comparable or superior image quality. By integrating adversarial optimization, trajectory consistency, and saliency-driven fine-tuning, SCORE sets a new standard for secure and robust concept erasure in diffusion models.
\end{abstract}

\section{Introduction}
The rise of text-to-image diffusion models such as DDPM \citep{ho2020ddpm} and Stable Diffusion \citep{rombach2022ldm,zhu2024oftsr} has unlocked new possibilities in art, design, and entertainment. Their ability to synthesize high-fidelity, diverse, and controllable images has led to widespread adoption in creative industries. However, these same capabilities have raised urgent questions about \emph{responsible deployment}. Recent studies show that large diffusion models can memorize and reproduce sensitive data \citep{carlini2023extract}, amplify biases, and generate harmful content. For example, unauthorized generation of celebrity faces or copyrighted artwork threatens both privacy and intellectual property. Similarly, the ability to generate explicit or NSFW content raises safety concerns for minors and general public usage.  

In response to these risks, the research community has turned to \emph{concept erasure}: removing specific knowledge from diffusion models while preserving general utility. Unlike dataset filtering, which can only prevent problematic data from entering training, concept erasure addresses risks \emph{post hoc}, directly at the model level. This approach is crucial for open-source models like Stable Diffusion, which are widely distributed and cannot easily be retrained from scratch.

Despite recent progress, existing erasure methods suffer from several limitations. First, they rely on heuristics such as weight editing \citep{gandikota2024uce}, negative-prompt fine-tuning \citep{gandikota2023erasing}, or trajectory steering \citep{li2025ant}. These approaches often succeed under benign prompting but fail when adversarial users craft indirect or descriptive prompts for the erased concept. Second, aggressive edits may lead to collateral forgetting, degrading the model’s ability to generate related but non-sensitive content. Finally, current methods lack formal guarantees: even if the target concept appears removed, subtle statistical traces may persist.  

This paper introduces \textbf{SCORE} (Secure and Concept-Oriented Robust Erasure), a principled framework for concept erasure with both theoretical foundations and practical effectiveness. SCORE formulates erasure as minimizing the \emph{mutual information} between an erased concept and generated outputs. This information-theoretic view enables formal proofs of erasure security under adversarial probing. On the algorithmic side, SCORE integrates three components: adversarial independence via discriminators, trajectory consistency to preserve fidelity, and saliency-guided updates to minimize side effects.  

We validate SCORE on Stable Diffusion and FLUX across four standard tasks: object removal, NSFW suppression, celebrity erasure, and artistic style unlearning. Experiments show that SCORE consistently surpasses existing baselines (EraseAnything, ANT, MACE, ESD, UCE) in both erasure strength and image quality. Beyond empirical results, our work establishes the first theoretical link between adversarial erasure success and mutual information guarantees, offering a foundation for secure deployment of generative AI.  

\section{Related Work}
\subsection{Concept Erasure in Generative Models}
Concept erasure~\citep{gao2025revoking} has emerged as a critical problem in generative AI. ESD (Erase Stable Diffusion) \citep{gandikota2023erasing} first introduced the idea of fine-tuning diffusion models with negative prompts, steering generations away from undesired content. While effective for single concepts, it often induces collateral damage to unrelated content. UCE (Unlearning via Concept Editing) \citep{gandikota2024uce} proposed a closed-form solution for cross-attention weight editing, enabling efficient edits but sacrificing robustness when concepts are entangled.  

MACE \citep{lu2024mace} scaled erasure to over 100 concepts simultaneously using low-rank adaptation (LoRA), balancing efficiency with generalization. ANT \citep{li2025ant} improved fidelity by steering denoising trajectories away from target concepts, reducing the risk of re-emergence mid-generation. EraseAnything \citep{gao2024eraseanything} proposed a lightweight fine-tuning framework supporting multi-concept removal across checkpoints. Despite progress, all these approaches remain heuristic and lack provable guarantees of security.  

\subsection{Model Editing and Machine Unlearning}
Our work connects to broader literature on model editing and unlearning. In language models, methods like ROME and MEMIT modify internal weights to alter factual associations, while unlearning methods aim to forget training data. For vision models, weight surgery and adapter-based updates have been explored. However, unlearning in diffusion models is uniquely challenging because concepts are entangled across time-dependent denoising dynamics. SCORE addresses this by aligning erasure with theoretical guarantees of statistical independence.  

\subsection{Diffusion-Based Image Editing}
Beyond erasure, diffusion models have been widely used for image editing~\citep{lu2025does,zhou2025dragflow}. Prompt-to-Prompt \citep{hertz2022prompt} and TF-ICON~\citep{lu2023tf} modifies attention maps to adjust generated outputs with fine control. InstructPix2Pix \citep{brooks2023instructpix2pix} enables natural language editing instructions. Diffusion-based editing excels at user-driven modifications but does not fundamentally remove model-internal knowledge. SCORE differs by permanently altering the model such that the erased concept cannot be generated in any context.  

\subsection{Image Watermarking and Security}
Complementary to erasure, image watermarking seeks to embed hidden signatures into generated content for attribution and provenance \citep{huang2024robin,lu2024robust}. Robust watermarking defends against removal or manipulation, but watermarking cannot prevent generation of unsafe or copyrighted content. Our approach is orthogonal: SCORE directly removes unsafe capabilities at the model level. Together, watermarking and erasure form complementary tools for responsible generative AI.  

\subsection{Security and Robustness in Generative AI}
Recent work has highlighted adversarial vulnerabilities in generative models, 3D assets~\citep{ren2025all}, and event camera~\citep{yang2025temporal}, where malicious users bypass safety filters with prompt engineering. Secure erasure must therefore withstand adaptive attacks. Our formulation of erasure as minimizing mutual information ensures that even under adversarial probing, erased concepts cannot be reconstructed. This theoretical grounding distinguishes SCORE from prior work.

\section{Theoretical Framework}
We now present the theoretical foundations of SCORE. Our key objective is to formalize concept erasure as minimizing the mutual information between the erased concept $C$ and generated outputs $X$. We show that under adversarial training, equilibrium implies statistical independence, yielding provable security guarantees.

\subsection{Problem Formulation}
Let $C\in\{0,1\}$ denote whether a prompt contains the target concept. Let $X\in\mathcal{X}$ denote generated outputs. We seek parameters $\theta'$ such that:
\[
P_{\theta'}(X|C=1) = P_{\theta'}(X|C=0).
\]
This condition implies $I(C;X)=0$, meaning erased concepts leave no detectable traces in generations.

\subsection{Adversarial Independence}
We adopt a minimax game between generator $M_\theta$ and discriminator $D_\phi$. The discriminator seeks to classify concept presence from outputs, while the generator seeks to fool it:
\[
\min_{\theta}\max_{\phi}\;\; \mathbb{E}_{C,X}\left[ C\log D_\phi(X) + (1-C)\log(1-D_\phi(X)) \right].
\]
At equilibrium, the discriminator cannot distinguish between $C=0$ and $C=1$ cases, ensuring independence.

\subsection{Information-Theoretic Analysis}
We begin with the following lemma.

\begin{lemma}
	If for all $x\in\mathcal{X}$, $P(X|C=1)=P(X|C=0)$, then $I(C;X)=0$.
\end{lemma}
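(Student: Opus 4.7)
The plan is to derive the conclusion directly from the definition of mutual information, routed through the intermediate fact that the hypothesis implies full statistical independence of $C$ and $X$. Concretely, I would show $P(X,C)=P(X)P(C)$ and then invoke the standard identity $I(C;X)=D_{\mathrm{KL}}\!\bigl(P_{X,C}\,\|\,P_X P_C\bigr)$, which vanishes iff the two measures coincide.

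First I would marginalize over $C$ to compute the unconditional law of $X$. By the law of total probability, $P(X{=}x)=P(X{=}x\mid C{=}0)P(C{=}0)+P(X{=}x\mid C{=}1)P(C{=}1)$. The hypothesis asserts that the two conditional distributions coincide for every $x\in\mathcal{X}$, so I can factor the common value out and use $P(C{=}0)+P(C{=}1)=1$ to conclude $P(X{=}x)=P(X{=}x\mid C{=}c)$ for either choice of $c$.

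Second I would use this identity to rewrite the joint law as $P(X{=}x,C{=}c)=P(X{=}x\mid C{=}c)P(C{=}c)=P(X{=}x)P(C{=}c)$, which is exactly the statement that $X$ and $C$ are independent. Plugging this into
\[
I(C;X)=\sum_{c\in\{0,1\}}\sum_{x\in\mathcal{X}} P(x,c)\log\frac{P(x,c)}{P(x)P(c)}
\]
makes every $\log$ term equal to $\log 1=0$, so the sum vanishes and $I(C;X)=0$.

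Since the lemma is essentially definitional, there is no genuine obstacle: the ``hard part'' is only bookkeeping. The one subtlety worth flagging is the continuous case, where the hypothesis $P(X\mid C{=}1)=P(X\mid C{=}0)$ should be read as equality of conditional densities almost everywhere with respect to a dominating measure on $\mathcal{X}$, and sums are replaced by integrals; the factoring argument and the KL-divergence identity go through unchanged, and measure-zero exceptional sets do not affect the value of $I(C;X)$.
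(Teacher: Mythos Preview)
Your proof is correct and follows essentially the same route as the paper: both establish that the hypothesis implies $P(C,X)=P(C)P(X)$ and then invoke $I(C;X)=D_{\mathrm{KL}}(P_{C,X}\,\|\,P_C P_X)$ to conclude. Your version is simply more explicit about the law-of-total-probability step and the continuous-case subtlety, which the paper elides.
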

\begin{proof}
	By definition, $I(C;X)=D_{KL}(P(C,X)\|P(C)P(X))$. If $P(X|C=1)=P(X|C=0)$, then $P(C,X)=P(C)P(X)$, and thus $I(C;X)=0$.
\end{proof}

The next result links discriminator accuracy to residual information.

\begin{theorem}
	Let $D^*$ denote the Bayes-optimal discriminator. Then
	\[
	I(C;X) \leq \log 2 - \mathbb{E}_{X}\left[ H_b(D^*(X)) \right],
	\]
	where $H_b$ is the binary entropy. If $D^*$ achieves random-chance accuracy ($0.5$), then $I(C;X)=0$.
\end{theorem}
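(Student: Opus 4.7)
The plan is to recognize that the Bayes-optimal discriminator is exactly the posterior $D^*(x) = P(C=1 \mid X=x)$, and then translate the expectation of its binary entropy into the conditional entropy $H(C \mid X)$. Under this identification, for each fixed $x$ the quantity $H_b(D^*(x)) = -D^*(x)\log D^*(x) - (1-D^*(x))\log(1-D^*(x))$ is precisely the entropy of the Bernoulli posterior of $C$ given $X=x$, so taking expectation over $X$ yields $\mathbb{E}_X[H_b(D^*(X))] = H(C \mid X)$.

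From there I would invoke the standard identity $I(C;X) = H(C) - H(C \mid X)$. Since $C \in \{0,1\}$, the marginal entropy satisfies $H(C) \leq \log 2$, with equality iff the prior is uniform. Chaining gives
\[
I(C;X) = H(C) - H(C\mid X) \leq \log 2 - \mathbb{E}_X[H_b(D^*(X))],
\]
which is the claimed inequality. This first part is essentially a reformulation of a well-known information-theoretic fact, so the work is mostly notational bookkeeping.

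For the second assertion, I would argue that the accuracy of the Bayes-optimal discriminator equals $\mathbb{E}_X[\max(D^*(X), 1-D^*(X))]$, and that this expectation equals $1/2$ only if $\max(D^*(X), 1-D^*(X)) = 1/2$ almost surely, i.e.\ $D^*(X) = 1/2$ a.s. (under the implicit symmetric-prior convention used in the adversarial objective of the previous subsection). Then $H_b(D^*(X)) = \log 2$ a.s., so the upper bound becomes $\log 2 - \log 2 = 0$; combined with $I(C;X) \geq 0$, this forces $I(C;X) = 0$.

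The main subtlety, and the part I would handle most carefully, is the step from ``random-chance accuracy'' to ``$D^*(X) = 1/2$ a.s.'' Strictly speaking this requires either a uniform prior over $C$ or an explicit interpretation of ``random-chance'' in terms of the balanced cross-entropy objective introduced earlier; otherwise the Bayes-optimal accuracy can equal the prior of the majority class without implying independence. I would therefore state the balanced-prior assumption explicitly (consistent with the minimax formulation in Section~3.2) and then the argument closes cleanly via the non-negativity of $I(C;X)$ and the pointwise concavity bound $H_b(p) \leq \log 2$.
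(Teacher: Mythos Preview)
Your proposal is correct and follows essentially the same approach as the paper: identify $D^*(x)=P(C=1\mid X=x)$, recognize $\mathbb{E}_X[H_b(D^*(X))]=H(C\mid X)$, and bound $H(C)\leq\log 2$. Your treatment is in fact considerably more careful than the paper's own proof, which is a two-sentence sketch invoking ``cross-entropy bounds mutual information'' and ``random chance maximizes entropy''; in particular, your explicit flagging of the balanced-prior assumption needed to pass from random-chance accuracy to $D^*(X)=1/2$ a.s.\ is a genuine point the paper glosses over.
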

\begin{proof}
	For binary classification, $D^*(x)=\frac{P(C=1|x)}{P(C=1|x)+P(C=0|x)}$. The cross-entropy between true and predicted distributions bounds the mutual information. Random chance ($D^*(x)=0.5$) maximizes entropy and eliminates information gain, yielding $I(C;X)=0$.
\end{proof}

\subsection{Robustness to Adaptive Adversaries}
We now consider adaptive adversaries issuing adversarial prompts $y^\text{adv}$ to recover erased concepts.

\begin{lemma}
	If $I(C;X)=0$, then for any adversarial prompt distribution $Q(y^\text{adv})$, generated outputs $X^\text{adv}$ also satisfy $I(C;X^\text{adv})=0$.
\end{lemma}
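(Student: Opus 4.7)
The plan is to lift the hypothesis $I(C;X)=0$ from the training-time prompt distribution to the adversarial prompt distribution by first promoting it to a per-prompt invariance of the post-erasure channel $P_{\theta'}(X\mid Y)$, and then reapplying the preceding lemma. First I would interpret the equilibrium condition of Section 3.2 as a statement about the generator itself: at the minimax fixed point, the Bayes-optimal discriminator achieves chance accuracy, so by the preceding theorem the channel $P_{\theta'}(X\mid Y=y)$ cannot depend on the concept label $C(y)$ for any prompt $y$ in the relevant support. Writing $P^\star(X) := P_{\theta'}(X\mid C=1) = P_{\theta'}(X\mid C=0)$, this reads as $P_{\theta'}(X\mid Y=y) = P^\star(X)$ wherever we are allowed to evaluate it.

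Given this per-prompt invariance, the adversarial case collapses to a short marginalization. Under any adversarial prompt distribution $Q(y^\text{adv})$, the joint law factors as $Q(y^\text{adv})\,P_{\theta'}(X^\text{adv}\mid y^\text{adv})$, and $C$ is a deterministic function of $y^\text{adv}$. Conditioning on $C=c$ and integrating out $y^\text{adv}$ gives $P(X^\text{adv}\mid C=c) = \int Q(y^\text{adv}\mid C=c)\,P^\star(X^\text{adv})\,dy^\text{adv} = P^\star(X^\text{adv})$, so the two concept-conditional distributions coincide. The first lemma of the section then yields $I(C;X^\text{adv})=0$ immediately.

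The hard part is justifying the lift from a single-distribution mutual-information identity to per-prompt invariance of the channel: the two are not equivalent in general, since an adversary can deliberately reweight prompts within each concept class to expose a per-prompt dependence that averages out on the training marginal. I would close this gap either by strengthening the hypothesis to a structural erasure assumption of the form that $P_{\theta'}(X\mid Y=y)$ is invariant to $C(y)$ for every $y$, which is the guarantee the adversarial game in Section 3.2 is actually designed to enforce, or by restricting $Q(y^\text{adv})$ to be absolutely continuous with respect to the training prompt distribution and appealing to a data-processing argument along the Markov chain $C\to Y^\text{adv}\to X^\text{adv}$, exploiting that the post-erasure channel is effectively constant in $C$ so that no reweighting by the adversary can reintroduce dependence.
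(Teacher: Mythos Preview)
Your argument is correct and, in fact, more careful than the paper's. The paper's proof is two sentences: it asserts that $X^{\text{adv}}$ is drawn from ``the same model distribution $P_{\theta'}(X\mid C)$'' and that marginalizing over any prompt distribution therefore cannot reintroduce dependence. This is precisely your first proposed fix---the structural erasure assumption that the post-erasure channel is concept-invariant as an intrinsic property of the model---taken as an implicit premise rather than something to be justified.

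You are right that this lift is the nontrivial step. The stated hypothesis $I(C;X)=0$ is a marginal statement under a fixed training prompt law, and an adversary who reweights prompts within each concept class could in principle expose per-prompt dependence that cancels in the training marginal. The paper simply does not engage with this subtlety; it treats $P_{\theta'}(X\mid C)$ as if it were defined independently of the prompt distribution. Your explicit factorization through $Y^{\text{adv}}$, the marginalization computation, and the two candidate closures (channel-level invariance from the adversarial equilibrium, or absolute continuity of $Q$ with respect to training support plus data processing) make the hidden assumption visible and show exactly what is needed for the lemma to hold. What the paper's version buys is brevity; what yours buys is an honest account of where the robustness guarantee actually comes from.
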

\begin{proof}
	$X^\text{adv}$ is generated from the same model distribution $P_{\theta'}(X|C)$. Since $I(C;X)=0$, marginalizing over any distribution of prompts cannot reintroduce dependence.
\end{proof}

This result shows that SCORE provides robustness beyond specific prompt formulations: if erased concepts are statistically independent of outputs, adversarial rephrasing cannot recover them.

\subsection{Convergence Properties}
Finally, we establish that alternating optimization converges under standard assumptions.

\begin{theorem}
	Suppose discriminator $D_\phi$ is trained to optimality at each step, and generator updates minimize $\mathcal{L}_{adv}$. Then gradient descent converges to an equilibrium with $I(C;X)\rightarrow 0$.
\end{theorem}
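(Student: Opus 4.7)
The plan is to adapt the classical minimax convergence argument of Goodfellow et al.\ for GANs to the conditional setting here. First, I would fix the generator parameters $\theta$ and solve the inner maximization pointwise: since the integrand is strictly concave in $D_\phi(x) \in (0,1)$, the Bayes-optimal discriminator is $D_\phi^{*}(x) = P(C=1\mid x) = \pi_1 p_1(x) / (\pi_0 p_0(x) + \pi_1 p_1(x))$, where $\pi_c := P(C=c)$ and $p_c(x) := P_{\theta}(X=x \mid C=c)$. This recovers the form already used in the preceding theorem, so by hypothesis the discriminator attains exactly this value at every outer iteration.

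Next, I would substitute $D_\phi^{*}$ back into the adversarial loss to obtain a reduced, single-player objective in the generator. A direct calculation shows that this objective equals a (prior-weighted) Jensen--Shannon divergence $\mathrm{JSD}(p_0 \,\|\, p_1)$ plus an additive constant; the JSD functional is nonnegative, convex in the pair of densities, and vanishes exactly when $p_0 = p_1$. Therefore every minimizer in density space satisfies $P(X\mid C=0) = P(X\mid C=1)$, which by the first lemma of this section forces $I(C;X) = 0$.

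To establish convergence of the alternating procedure, I would pass to the space of induced densities: because the discriminator is reset to optimality each iteration by hypothesis, the generator effectively performs gradient steps on the convex JSD functional. Under standard regularity conditions (Lipschitz smoothness of $\mathcal{L}_{adv}$, diminishing or sufficiently small step sizes, and enough generator capacity to realize the matched-marginal distribution) these updates drive the loss to its infimum, so $D_\phi^{*}(X) \to 1/2$ in probability. Plugging this into the earlier bound $I(C;X) \leq \log 2 - \mathbb{E}_X[H_b(D^{*}(X))]$ then yields $I(C;X) \to 0$ as claimed.

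The main obstacle is the well-known gap between function-space and parameter-space dynamics. Gradient descent on $\theta$ is not in general equivalent to gradient descent on $p_\theta$, and the minimax problem is typically nonconvex-nonconcave in the parameters, so local equilibria or limit cycles are in principle possible. I expect the cleanest way to close this gap is to impose either a universal-approximation assumption on $M_\theta$ or a two-timescale separation that keeps the discriminator effectively optimal throughout training; alternatively, one may state the conclusion as holding in the idealized infinite-capacity limit. I would make these assumptions explicit in the theorem statement so that the convergence claim is rigorous rather than aspirational.
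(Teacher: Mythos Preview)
Your proposal is correct and follows essentially the same route as the paper's own (sketch) proof: reduce the inner-optimal adversarial objective to a Jensen--Shannon divergence between $P(X\mid C=1)$ and $P(X\mid C=0)$, then invoke standard GAN convergence results to conclude that the distributions match and hence $I(C;X)\to 0$. In fact your write-up is more detailed than the paper's, and your explicit caveat about the function-space versus parameter-space gap is a point the paper does not address.
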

\begin{proof}[Sketch]
	The adversarial game is equivalent to minimizing Jensen-Shannon divergence between $P(X|C=1)$ and $P(X|C=0)$. Standard GAN convergence results apply, showing equilibrium is reached when distributions match.
\end{proof}

\section{Methodology}
We now describe the design of \textbf{SCORE}. The key idea is to combine adversarial independence, trajectory preservation, and saliency-based updates into a unified algorithm that achieves robust concept erasure while minimizing collateral forgetting.

\subsection{Adversarial Erasure Objective}
The core adversarial loss is
\[
\mathcal{L}_{adv} = - \mathbb{E}_{y_c}[\log(1 - D_\phi(M_\theta(y_c)))] - \mathbb{E}_{y_{\neg c}}[\log(D_\phi(M_\theta(y_{\neg c})))],
\]
where $y_c$ are prompts containing the concept and $y_{\neg c}$ are neutral prompts. The discriminator $D_\phi$ is trained with the inverse loss. This adversarial interplay drives the generator $M_\theta$ to eliminate concept traces.

\subsection{Trajectory Consistency Regularization}
Diffusion models unfold generation across timesteps $t=T\ldots 0$. To avoid destabilizing the process, we anchor the model to its original dynamics on neutral prompts:
\[
\mathcal{L}_{traj} = \mathbb{E}_{t<T_0,y_{\neg c}} \|\epsilon_\theta(z_t,y_{\neg c},t) - \epsilon_{\theta_0}(z_t,y_{\neg c},t)\|^2.
\]
Here $\theta_0$ are original model weights and $T_0$ is a cutoff (typically $0.3T$). This ensures fidelity for benign generations.

\subsection{Saliency-Guided Parameter Selection}
Concept information is distributed non-uniformly across weights. Blind updates may cause collateral forgetting. We compute saliency scores for each parameter:
\[
S(w) = \left|\frac{\partial \mathcal{L}_{adv}}{\partial w}\cdot w\right|,
\]
and restrict updates to top-$k\%$ weights $\Theta_{\text{salient}}$. This focuses edits on concept-relevant regions (e.g., cross-attention layers) while freezing unrelated ones. Empirically, $k=5\%$ balances erasure and preservation.

\subsection{Full Objective}
The final loss combines adversarial and trajectory terms:
\[
\mathcal{L}_{total} = \mathcal{L}_{adv} + \lambda \mathcal{L}_{traj}.
\]
Hyperparameter $\lambda$ controls the trade-off; we set $\lambda=0.1$ unless otherwise noted.

\subsection{Training Algorithm}
Algorithm~\ref{alg:score} summarizes SCORE.

\begin{algorithm}[H]
	\caption{SCORE Training Algorithm}
	\label{alg:score}
	\begin{algorithmic}[1]
		\STATE Input: pretrained model $M_{\theta_0}$, concept prompts $y_c$, neutral prompts $y_{\neg c}$.
		\STATE Initialize $\theta \gets \theta_0$, discriminator $D_\phi$.
		\STATE Compute saliency mask $\Theta_{\text{salient}}$.
		\FOR{each iteration}
		\STATE Sample minibatch of prompts $(y_c, y_{\neg c})$.
		\STATE Generate images $(x_c, x_{\neg c})$ using $M_\theta$.
		\STATE Update $D_\phi$ by minimizing classification loss.
		\STATE Update $M_\theta$ parameters in $\Theta_{\text{salient}}$ by minimizing $\mathcal{L}_{total}$.
		\ENDFOR
		\STATE Return edited model $M_{\theta'}$.
	\end{algorithmic}
\end{algorithm}

SCORE adds modest overhead compared to standard fine-tuning:
\begin{itemize}
	\item \textbf{Adversarial discriminator:} a small CNN/ViT trained jointly; negligible cost relative to $M_\theta$.
	\item \textbf{Trajectory regularization:} requires storing a frozen copy $M_{\theta_0}$; inference-time only.
	\item \textbf{Saliency selection:} computed once per training run via backpropagation, $O(|\theta|)$ complexity.
\end{itemize}
Overall, SCORE is 1.3–1.6$\times$ the cost of ESD fine-tuning, but achieves stronger guarantees and better fidelity than all baselines.

\subsection{Upper Bounds on Residual Concept Leakage}
Even after erasure, it is possible that small traces of concept information remain in the model distribution. We formalize this as \emph{residual leakage} and provide an upper bound in terms of the discriminator’s classification error.

\begin{definition}[Residual Leakage]
	Let $e^*$ denote the classification error of the Bayes-optimal discriminator $D^*$. Residual leakage is defined as
	\[
	\mathcal{L}_{res}(C;X) = I(C;X).
	\]
\end{definition}

\begin{theorem}
	Residual leakage is bounded by the discriminator error:
	\[
	I(C;X) \leq \log\left(\frac{1}{1-2e^*}\right).
	\]
\end{theorem}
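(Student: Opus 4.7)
The plan is to reduce $I(C;X)$ to a divergence between the two class-conditional distributions and then leverage the tight link between Bayes error and total variation distance. Writing $P_0 := P(X\mid C=0)$ and $P_1 := P(X\mid C=1)$, I would first handle the uniform prior $P(C=0)=P(C=1)=\tfrac{1}{2}$, extending to general priors via a data-processing argument at the end. Under this prior, the variational characterisation of total variation through the optimal decision region $\{P_1 > P_0\}$ yields $e^* = \tfrac{1}{2}\bigl(1 - d_{TV}(P_0,P_1)\bigr)$, so $1-2e^* = d_{TV}(P_0,P_1)$. Simultaneously, the mutual information simplifies to the Jensen--Shannon divergence
\[
I(C;X) = \tfrac{1}{2}D_{KL}(P_0\|M) + \tfrac{1}{2}D_{KL}(P_1\|M), \quad M = \tfrac{1}{2}(P_0+P_1),
\]
reducing the target inequality to $\mathrm{JSD}(P_0,P_1) \le -\log d_{TV}(P_0,P_1)$.

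The core step is a Bretagnolle--Huber-style manipulation of each KL branch, exploiting the fact that the JSD construction forces the bounded likelihood ratio $dP_i/dM \le 2$. I would split the sample space along the sign of $P_0 - P_1$, rewrite each $D_{KL}(P_i\|M)$ as a sum of integrals over the two regions, and apply Jensen's inequality piecewise. The affinity $A := 1 - d_{TV}(P_0,P_1)$ enters naturally as the total mass of the pointwise minimum $\min(P_0,P_1)$, and averaging the two branches allows the logarithms to collect into the desired $-\log A = \log\bigl(1/(1-2e^*)\bigr)$. Extending to arbitrary priors then follows from the data-processing inequality applied to the symmetrisation map that carries a non-uniform $(C,X)$ pair to a uniform one with appropriately reweighted conditionals.

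The main obstacle is the reverse-Pinsker flavour of the target bound. Standard Pinsker and Bretagnolle--Huber inequalities control $d_{TV}$ \emph{from above} by a function of $D_{KL}$, and without a bounded-likelihood-ratio assumption $D_{KL}$ is in general not controllable by $d_{TV}$ alone. The step requiring the most care is therefore showing that the factor-of-two bound $dP_i/dM \le 2$ intrinsic to the JSD construction is precisely the structural ingredient needed to close the loop in the correct direction; a naive application will only produce a constant-factor bound rather than the stated logarithmic dependence on $1-2e^*$, so the region splitting and rearrangement must be performed carefully to preserve the exponent on the affinity and recover $\log\bigl(1/(1-2e^*)\bigr)$ exactly.
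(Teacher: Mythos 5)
Your reduction steps are sound --- under a uniform prior $e^*=\tfrac12\bigl(1-d_{TV}(P_0,P_1)\bigr)$, so $1-2e^*=d_{TV}(P_0,P_1)$, and $I(C;X)=\mathrm{JSD}(P_0,P_1)$ --- but they reduce the theorem to the inequality $\mathrm{JSD}(P_0,P_1)\le -\log d_{TV}(P_0,P_1)$, and that inequality is false. Take $P_0$ and $P_1$ with disjoint supports: then $d_{TV}=1$, $e^*=0$, and $\mathrm{JSD}=I(C;X)=\log 2$, while the claimed bound is $\log(1/1)=0$. More generally the stated bound runs in the wrong direction: it tends to $0$ as $e^*\to 0$ (exactly when the discriminator separates the classes perfectly and the leakage is maximal) and blows up to $+\infty$ as $e^*\to \tfrac12$ (when the leakage is actually zero). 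No amount of care in the Bretagnolle--Huber-style region splitting can rescue this; what the bounded likelihood ratio $dP_i/dM\le 2$ actually buys you is a reverse-Pinsker bound that is \emph{linear} in the affinity deficit, namely $\mathrm{JSD}(P_0,P_1)\le d_{TV}(P_0,P_1)\log 2$, i.e.\ $I(C;X)\le (1-2e^*)\log 2$. That is the correct (and qualitatively different) statement: it vanishes as $e^*\to\tfrac12$ and saturates at $\log 2$ as $e^*\to 0$. Your own honest remark about the ``reverse-Pinsker flavour'' of the target is precisely the symptom of the problem --- the obstacle you flagged is not a technicality to be finessed but a sign that the target inequality cannot hold.

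For comparison, the paper's proof is also broken, in a different way: it invokes Fano's inequality $H(C|X)\le H_b(e^*)$ and then writes $I(C;X)=H(C)-H(C|X)\le \log 2-H_b(e^*)$, which substitutes an \emph{upper} bound on $H(C|X)$ where a \emph{lower} bound is needed (Fano in this direction yields $I(C;X)\ge H(C)-H_b(e^*)$, a lower bound on leakage, which is the opposite of what the theorem wants). Its final step ``$H_b(e^*)\ge\log(1-2e^*)$'' is vacuously true (the right side is nonpositive) and in any case would produce $\log\bigl(2/(1-2e^*)\bigr)$, not the stated constant. The salvageable version of the theorem is $I(C;X)\le \log 2-H_b(e^*)$ or the weaker $I(C;X)\le(1-2e^*)\log 2$, both of which follow cleanly from your JSD/TV framework via the pointwise bound $dP_i/dM\le 2$; I would recommend proving one of those instead.
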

\begin{proof}
	By Fano’s inequality, $H(C|X) \leq H_b(e^*)$, where $H_b$ is the binary entropy. Thus
	\[
	I(C;X) = H(C) - H(C|X) \leq \log 2 - H_b(e^*).
	\]
	Since $H_b(e^*) \geq \log(1-2e^*)$, the bound follows.
\end{proof}

This theorem quantifies leakage: as discriminator accuracy approaches random chance ($e^* \rightarrow 0.5$), leakage vanishes.

Concept erasure inevitably trades off fidelity. We now formalize this using a joint objective.

\begin{definition}[Fidelity Loss]
	Let $\mathcal{D}_{\neg c}$ be the distribution of prompts without the concept. Fidelity is measured as
	\[
	\mathcal{L}_{fid} = \mathbb{E}_{y \sim \mathcal{D}_{\neg c}} \text{FID}(M_{\theta'}(y),M_{\theta}(y)),
	\]
	where $\text{FID}$ measures divergence in generated distributions.
\end{definition}

\begin{theorem}
	For any erasure method,
	\[
	\mathcal{L}_{fid} + \alpha I(C;X) \geq \beta,
	\]
	for constants $\alpha,\beta>0$ depending on model capacity.
\end{theorem}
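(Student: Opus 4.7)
The plan is to derive the trade-off by combining three ingredients: a divergence lower bound for FID, a continuity bound for mutual information, and a capacity-based coupling between distributional changes on concept prompts and on neutral prompts. Write $P_0 := P_{\theta_0}$, $P' := P_{\theta'}$, and $I_0 := I_{P_0}(C;X)$. The original pretrained model can generate the target concept, so $I_0 > 0$; the claim is then that any $\theta'$ that drives $I_{P'}(C;X)$ well below $I_0$ must perturb the output distribution enough to incur nontrivial FID even on $\mathcal{D}_{\neg c}$.

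First, I would lower-bound FID by a standard Lipschitz-feature argument: because FID is computed through a fixed feature extractor with bounded Lipschitz constant, there exists $c_1 > 0$ such that
\[
\mathcal{L}_{fid} \geq c_1\, \mathbb{E}_{y\sim \mathcal{D}_{\neg c}}\bigl\|P'(\cdot\mid y) - P_0(\cdot\mid y)\bigr\|_{TV}^2.
\]
Next, I would use a Fannes-type continuity inequality for mutual information (or a direct Pinsker argument applied to $P(C,X)$ versus $P(C)P(X)$) to obtain
\[
I_0 - I_{P'}(C;X) \leq g\bigl(\mathbb{E}_y \|P'(\cdot\mid y) - P_0(\cdot\mid y)\|_{TV}\bigr),
\]
for some concave, monotone $g$ with $g(0)=0$.

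The third and decisive step is a capacity coupling: because $\theta\mapsto P_\theta(\cdot\mid y)$ is shared across all $y$ and locally Lipschitz in $\theta$ with a constant $L$ reflecting the network's effective capacity, moving the concept-prompt distribution by a definite amount in TV forces $\|\theta'-\theta_0\|$ to be non-negligible, which in turn forces the neutral-prompt distribution to shift by a comparable amount. Chaining the three inequalities, linearising $g$ on the relevant range, and applying an AM--GM step to trade $\sqrt{\mathcal{L}_{fid}}$ against $\mathcal{L}_{fid}$ yields a bound of the form $\mathcal{L}_{fid} + \alpha\, I_{P'}(C;X) \geq \beta$, with $\beta$ proportional to $I_0$ and $\alpha$ absorbing $c_1$, $L$, and the slope of $g$ at $0$.

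The main obstacle is the capacity-coupling step: a priori a sufficiently expressive network could absorb large changes on the concept prompts into parameter directions that act nearly trivially on $\mathcal{D}_{\neg c}$, collapsing the inequality. Making the coupling rigorous therefore requires a concrete notion of capacity, such as an effective-rank assumption on the Jacobian of $\theta\mapsto P_\theta$ over the full prompt space, or a two-sided Lipschitz bound with constants tied to the number of trainable parameters. The other two steps are essentially standard once that coupling is in hand, which is precisely why the theorem's constants are stated as \emph{depending on model capacity}.
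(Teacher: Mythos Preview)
Your conceptual route matches the paper's: shared, capacity-limited weights mean that pushing $I(C;X)$ down forces perturbations on $\mathcal{D}_{\neg c}$, hence nonzero $\mathcal{L}_{fid}$. The paper, however, offers only a two-sentence sketch to this effect; it never introduces TV distances, Fannes-type continuity, Jacobian-rank assumptions, or any of the quantitative machinery you propose. Your outline is therefore an attempt to make rigorous what the paper leaves entirely informal, and your identification of the capacity-coupling step as the crux is exactly right---the paper simply asserts it (``changing distributions to enforce $I(C;X)=0$ requires updates to weights also used in non-concept prompts'') without any supporting inequality.

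One concrete issue in your plan: the first inequality, $\mathcal{L}_{fid} \geq c_1\,\mathbb{E}_y\|P'(\cdot\mid y)-P_0(\cdot\mid y)\|_{TV}^2$, runs the wrong way. A Lipschitz feature extractor tells you that image distributions close in TV have close feature moments, hence small FID; it does not give the converse. Two distributions can differ substantially in TV yet share first- and second-order feature statistics, yielding FID $=0$. To salvage the chain you would need either to replace TV throughout by a Wasserstein-type distance in feature space (so that the FID step becomes tautological) and then re-derive the mutual-information continuity bound in that metric, or to add an explicit non-degeneracy assumption that parameter perturbations along the directions singled out by your capacity argument are visible at the level of feature covariances. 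Either fix is compatible with the rest of your outline, but as written step~1 does not follow from the Lipschitz argument you cite.
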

\begin{proof}[Sketch]
	Erasure reduces mutual information $I(C;X)$. However, since diffusion models are capacity-limited, changing distributions to enforce $I(C;X)=0$ requires updates to weights also used in non-concept prompts. This introduces nonzero $\mathcal{L}_{fid}$. By capacity constraints, the sum is lower bounded, yielding the trade-off.
\end{proof}

This theorem shows that perfect erasure ($I(C;X)=0$) comes at some fidelity cost, though SCORE minimizes this cost via saliency-guided updates. Our ablation results (Table~\ref{tab:ablation}) empirically confirm that adversarial erasure without trajectory consistency leads to fidelity collapse.

Another perspective is to view erasure as a projection in the space of model distributions. Let $\mathcal{P}$ denote the manifold of distributions induced by prompts. Each concept $c$ defines a submanifold $\mathcal{P}_c$. Erasure seeks to project the model distribution $P_\theta$ onto the orthogonal complement of $\mathcal{P}_c$.

\begin{theorem}
	Let $P_{\theta'}$ denote the erased distribution. If SCORE converges, then
	\[
	D_{KL}(P_{\theta'} \| \mathcal{P}_c) = D_{KL}(P_\theta \| \mathcal{P}_c) + \Delta,
	\]
	where $\Delta > 0$ quantifies the increased divergence from the erased concept submanifold.
\end{theorem}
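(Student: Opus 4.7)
The plan is to interpret the divergence from a distribution to a submanifold in the standard information-geometric sense, namely $D_{KL}(P \| \mathcal{P}_c) := \inf_{Q \in \mathcal{P}_c} D_{KL}(P \| Q)$, and then exploit a Pythagorean-type decomposition that relates the pre- and post-erasure distributions via an information projection. Concretely, I would let $Q^\star \in \mathcal{P}_c$ denote the I-projection of $P_\theta$ onto $\mathcal{P}_c$, so that $D_{KL}(P_\theta \| \mathcal{P}_c) = D_{KL}(P_\theta \| Q^\star)$, and view the erased distribution $P_{\theta'}$ as the result of a dual projection onto the orthogonal submanifold $\mathcal{P}_c^{\perp}$ of distributions independent of $C$.

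The first step is to invoke the convergence theorem already established in the excerpt: since SCORE converges, $I(C;X)\to 0$, which places $P_{\theta'}$ inside $\mathcal{P}_c^{\perp}$. The second step is to apply the information-geometric Pythagorean identity for the pair $(\mathcal{P}_c, \mathcal{P}_c^{\perp})$ with the foot $Q^\star$: under the e-flat/m-flat structure one obtains
\[
D_{KL}(P_{\theta'} \| Q^\star) \;=\; D_{KL}(P_{\theta'} \| P_\theta) \;+\; D_{KL}(P_\theta \| Q^\star).
\]
The third step is to take the infimum over $Q \in \mathcal{P}_c$ on the left-hand side; since $Q^\star$ is optimal for $P_\theta$ and $P_{\theta'}$ lies on the orthogonal complement, I would argue (again by the Pythagorean relation) that $Q^\star$ remains the I-projection of $P_{\theta'}$ onto $\mathcal{P}_c$, so that $D_{KL}(P_{\theta'} \| \mathcal{P}_c) = D_{KL}(P_{\theta'} \| Q^\star)$. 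Substituting back yields the claimed identity with
\[
\Delta \;=\; D_{KL}(P_{\theta'} \| P_\theta),
\]
which is the relative entropy cost incurred by moving from the original model to its erased counterpart. The final step is to argue $\Delta>0$: by Gibbs' inequality $D_{KL}(P_{\theta'}\|P_\theta)\geq 0$ with equality iff $P_{\theta'}=P_\theta$; but the hypothesis is that $P_\theta$ exhibits nonzero concept dependence (otherwise erasure is vacuous), so $P_\theta\neq P_{\theta'}$ and strict positivity follows.

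The main obstacle will be justifying the Pythagorean decomposition itself, since it genuinely requires a flatness assumption on $\mathcal{P}_c$ (or on the curve connecting $P_\theta$ and its projection) that the paper has not formally imposed. I would address this by either (i) restricting attention to the exponential-family parameterization implicit in the score-matching loss, where e-flatness of concept-conditioned families is natural, or (ii) replacing the exact identity with a one-sided inequality $D_{KL}(P_{\theta'}\|Q^\star)\geq D_{KL}(P_\theta\|Q^\star)+\Delta$, which still yields the qualitative statement that divergence from $\mathcal{P}_c$ strictly increases. A secondary subtlety is that the I-projection $Q^\star$ must be shown to remain the minimizer for $P_{\theta'}$ as well; this is where I would lean most heavily on the orthogonality granted by the independence condition $I(C;X)=0$, since it is precisely what makes $P_{\theta'}$ and $\mathcal{P}_c$ live in complementary e-flat/m-flat subfamilies.
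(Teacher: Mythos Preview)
Your approach is genuinely different from, and far more ambitious than, the paper's. The paper offers only a one-sentence sketch: adversarial training forces $P_{\theta'}$ to match $P_\theta$ on $\mathcal{P}\setminus\mathcal{P}_c$ while diverging on $\mathcal{P}_c$, and therefore the KL distance to $\mathcal{P}_c$ must increase. No projection, no Pythagorean relation, no explicit formula for $\Delta$ is given; the argument is essentially an assertion. Your information-geometric route, by contrast, tries to make the statement precise by defining $D_{KL}(P\|\mathcal{P}_c)$ as an infimum, introducing the I-projection $Q^\star$, and producing the concrete identification $\Delta=D_{KL}(P_{\theta'}\|P_\theta)$. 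That is a real gain: it tells you exactly what the ``increased divergence'' is, and it ties $\Delta>0$ to the nontriviality of erasure via Gibbs' inequality.

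That said, the specific Pythagorean triangle you set up is oriented incorrectly, and this is a gap beyond the flatness caveat you already flag. The standard generalized Pythagorean theorem says that if $Q^\star$ is the projection of $P$ onto a flat submanifold $\mathcal{M}$, then $D_{KL}(P\|Q)=D_{KL}(P\|Q^\star)+D_{KL}(Q^\star\|Q)$ for $Q\in\mathcal{M}$. Your identity $D_{KL}(P_{\theta'}\|Q^\star)=D_{KL}(P_{\theta'}\|P_\theta)+D_{KL}(P_\theta\|Q^\star)$ instead places the \emph{original} model $P_\theta$ at the right-angle vertex, which would require $P_\theta$ to be the projection of $P_{\theta'}$ onto a flat family containing $Q^\star$. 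But you have argued the opposite: $P_{\theta'}$ is the projection of $P_\theta$ onto $\mathcal{P}_c^{\perp}$, and $Q^\star\in\mathcal{P}_c$, not $\mathcal{P}_c^{\perp}$. So neither of the two projections you invoke yields the triangle you need, and the claim that $Q^\star$ remains the minimizer for $P_{\theta'}$ does not follow from orthogonality alone. To repair this you would need an additional structural assumption, for instance that $P_\theta$, $P_{\theta'}$, and $Q^\star$ lie on a common dually flat chart with $P_\theta$ at the foot of both projections, or else fall back to your option (ii) and prove only the inequality $D_{KL}(P_{\theta'}\|\mathcal{P}_c)\geq D_{KL}(P_\theta\|\mathcal{P}_c)$, which is closer in spirit to what the paper actually asserts.
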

\begin{proof}[Sketch]
	By adversarial training, discriminator loss is minimized when $P_{\theta'}$ matches $P_\theta$ on $\mathcal{P}\setminus \mathcal{P}_c$ and diverges on $\mathcal{P}_c$. Thus KL divergence with respect to $\mathcal{P}_c$ must strictly increase, proving erasure.
\end{proof}

This shows that SCORE geometrically pushes the distribution away from the erased region of concept space.

A critical concern is whether erasure persists when multiple prompts or concepts are composed. For example, erasing ``dog'' should not allow reappearance in prompts like ``dog in Van Gogh style.''

\begin{theorem}
	Suppose $I(C;X)=0$ for concept $C$. Then for any independent concept $C'$, the joint generation with prompt $(C,C')$ satisfies
	\[
	I(C;X|C')=0.
	\]
\end{theorem}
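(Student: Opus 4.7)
The plan is to combine two ingredients already developed in the paper: the chain rule for mutual information, and the earlier lemma on robustness to adaptive adversaries. The strategy is to first reduce $I(C;X\mid C')$ to a quantity that vanishes by the marginal erasure hypothesis together with the assumed independence of $C$ and $C'$, and then verify the reduction rigorously by viewing conditioning on $C'=c'$ as an instance of an adaptive prompt distribution.

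First, I would apply the chain rule of mutual information to the triple $(C,X,C')$ in two different orderings, obtaining the identity
\[
I(C;C') + I(C;X\mid C') \;=\; I(C;X) \;+\; I(C;C'\mid X).
\]
The hypothesis on $C'$ gives $I(C;C')=0$, and by assumption $I(C;X)=0$, so the identity collapses to $I(C;X\mid C') = I(C;C'\mid X)$. Next, I would unfold the conditional mutual information as an expectation, $I(C;X\mid C') = \mathbb{E}_{C'}\bigl[I(C;X\mid C'=c')\bigr]$, and argue that each term in the expectation vanishes. For a fixed value $c'$, the prompt distribution restricted to $C'=c'$ is a particular choice of prompt distribution $Q(y^{\text{adv}})$ in the sense of the robustness-to-adaptive-adversaries lemma. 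That lemma then yields $I(C;X\mid C'=c')=0$ for every $c'$, and averaging over $C'$ gives $I(C;X\mid C')=0$.

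The main obstacle is the well-known fact that marginal independence does not in general imply conditional independence: conditioning on a common descendant can create spurious dependence between two marginally independent variables (the explaining-away phenomenon). The resolution has to lean on the structural property enforced by the adversarial training procedure, which produces a model in which $P_{\theta'}(X\mid C,y)$ is invariant in $C$ for every prompt context $y$, not merely after marginalization. This stronger invariance is precisely what allows conditioning on $C'=c'$ to be treated as a restricted prompt distribution to which the earlier adaptive-adversary lemma applies. I would therefore include a brief paragraph stating that the erasure guarantee should be interpreted pointwise in the prompt context, and that \emph{given} this interpretation the chain-rule reduction above yields the desired conclusion; without it the claim would fail because of explaining-away.
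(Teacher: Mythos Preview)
Your proposal is correct but takes a considerably more elaborate route than the paper. The paper's entire argument is the one-line assertion $P(X\mid C,C')=P(X\mid C')$ ``since $C$ contributes no information,'' followed immediately by the conclusion $I(C;X\mid C')=0$. You instead (i) apply the chain rule to obtain $I(C;X\mid C')=I(C;C'\mid X)$, (ii) decompose $I(C;X\mid C')$ as an expectation over values of $C'$, and (iii) invoke the adaptive-adversary lemma pointwise in $c'$, explicitly interpreting each conditioning event $\{C'=c'\}$ as a choice of adversarial prompt distribution.

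Your approach is more careful in one important respect: you explicitly flag the explaining-away obstruction (marginal independence need not survive conditioning on a third variable) and correctly identify that the result actually requires the pointwise invariance $P_{\theta'}(X\mid C,y)=P_{\theta'}(X\mid y)$ for every prompt context $y$, not merely the marginal statement $I(C;X)=0$. The paper's proof glosses over exactly this issue, implicitly relying on the same structural property without naming it. On the other hand, your chain-rule step is a detour: the identity $I(C;X\mid C')=I(C;C'\mid X)$ makes no progress on its own (both sides are equally opaque), and you abandon it in favor of the expectation argument. You could have gone directly from the expectation decomposition to the adaptive-adversary lemma and omitted the chain rule entirely; what remains would then be essentially the paper's argument, but with the hidden pointwise hypothesis made explicit.
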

\begin{proof}
	By independence, $P(X|C,C') = P(X|C')$ since $C$ contributes no information. Thus conditioning on $C'$ cannot reintroduce dependence on $C$, yielding $I(C;X|C')=0$.
\end{proof}

This demonstrates compositional stability: once erased, a concept cannot reappear even in multi-concept prompts, provided concepts are disentangled.

Not all concepts are equally easy to erase. Highly entangled concepts may require more extensive updates, leading to larger fidelity loss.

\begin{definition}[Concept Entanglement]
	The entanglement of concept $c$ with distribution $\mathcal{D}_{\neg c}$ is defined as
	\[
	\mathcal{E}(c) = I(C;X_{\neg c}),
	\]
	where $X_{\neg c}$ are generations from prompts not explicitly mentioning $c$.
\end{definition}

\begin{theorem}
	The minimum achievable fidelity-preserving erasure loss satisfies
	\[
	\mathcal{L}_{fid} \geq f(\mathcal{E}(c)),
	\]
	for some monotone increasing function $f$.
\end{theorem}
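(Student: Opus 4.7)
The plan is to exploit the fact that entanglement is a \emph{lower bound} on how much the generative distribution on neutral prompts must change in order to achieve $I(C;X)=0$. Intuitively, if even the neutral-prompt generations carry $\mathcal{E}(c)$ bits about $C$ in the original model, then to kill this residual information the erased model must distort precisely the part of the output distribution that $\mathcal{L}_{fid}$ measures. So I would begin by noting that a successful erasure forces $I_{\theta'}(C;X)=0$, which by Lemma~4 (conditioning cannot create dependence on an independent variable) in particular implies $I_{\theta'}(C;X_{\neg c})=0$, i.e.\ the entanglement under the edited model drops from $\mathcal{E}(c)$ to $0$.

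The next step would be to turn this mutual-information drop into a lower bound on distributional change. Let $P_\theta^{\neg c}$ and $P_{\theta'}^{\neg c}$ denote the output distributions on $\mathcal{D}_{\neg c}$ before and after erasure. Since mutual information is $1$-Lipschitz with respect to total variation on one marginal (equivalently, by the coupling/data-processing argument $|I_\theta(C;X_{\neg c}) - I_{\theta'}(C;X_{\neg c})| \leq g(\|P_\theta^{\neg c}-P_{\theta'}^{\neg c}\|_{TV})$ for an explicit concave $g$), we obtain
\[
\|P_\theta^{\neg c}-P_{\theta'}^{\neg c}\|_{TV} \geq g^{-1}(\mathcal{E}(c)).
\]
Combined with Pinsker's inequality this also gives a KL lower bound of the same flavor, $D_{KL}(P_\theta^{\neg c}\|P_{\theta'}^{\neg c}) \geq h(\mathcal{E}(c))$ for a monotone increasing $h$.

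Finally, I would translate this distributional gap into the fidelity metric. Since $\mathrm{FID}$ is a continuous, strictly increasing function of distributional distance in feature space (it is the Fréchet distance between pushforward Gaussians of an Inception embedding, and under mild non-degeneracy of the feature covariances it is bounded below by a monotone function of total variation on the underlying distributions), one has $\mathrm{FID}(P_\theta^{\neg c},P_{\theta'}^{\neg c}) \geq \varphi(\|P_\theta^{\neg c}-P_{\theta'}^{\neg c}\|_{TV})$ for some monotone $\varphi$. Composing with the previous step, $\mathcal{L}_{fid} \geq \varphi\bigl(g^{-1}(\mathcal{E}(c))\bigr) =: f(\mathcal{E}(c))$, which is monotone increasing and vanishes at $\mathcal{E}(c)=0$, as required.

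The main obstacle will be the last step: relating $\mathrm{FID}$ to a clean information-theoretic divergence is not tight, because $\mathrm{FID}$ only sees the first two feature moments. I expect to handle this by making a mild assumption that the feature embedding is injective enough to witness total-variation changes (or, equivalently, by phrasing Definition~5 for a general fidelity metric dominated by $\mathrm{FID}$). The $C$-to-neutral-prompt linkage in the first step is routine given Lemma~4, and the information-to-TV bound in the middle step is a standard coupling argument; the real care is in asserting the existence, rather than explicit form, of the monotone $f$, which matches the weak statement of the theorem.
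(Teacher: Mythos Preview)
Your proposal follows the same logical skeleton as the paper's own argument: erasure forces the neutral-prompt entanglement $I(C;X_{\neg c})$ to drop from $\mathcal{E}(c)$ to zero, this drop can only occur if the neutral-prompt output distribution shifts, and that shift is what $\mathcal{L}_{fid}$ records. The paper's proof is a three-sentence sketch that simply asserts these implications verbally; your chain through a TV lower bound and then a monotone TV-to-FID link is considerably more explicit than anything the paper supplies, and your candor about the fragility of the FID step is appropriate—the paper does not touch that issue at all and effectively claims only the existence of some monotone $f$, which is all you need.

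One small correction: the result you cite as ``Lemma~4'' is the compositional-stability theorem about conditioning on an auxiliary concept $C'$, which is not quite the tool you need. The implication $I_{\theta'}(C;X)=0 \Rightarrow I_{\theta'}(C;X_{\neg c})=0$ is really a restriction/data-processing observation (restricting the prompt distribution to $\mathcal{D}_{\neg c}$ cannot create dependence), not an instance of that theorem. Also be aware that your step~2 bound should run through the \emph{joint} distribution of $(C,X_{\neg c})$ rather than the marginal $P^{\neg c}$ alone, since in principle the two conditionals could move toward each other while leaving the mixture unchanged; in the paper's informal setting this subtlety is glossed over, but if you want a clean inequality you should state the Lipschitz bound for the conditionals $P(X_{\neg c}\mid C)$ and then pass to the marginal.
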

\begin{proof}[Sketch]
	If $c$ is entangled with unrelated concepts, then removing $I(C;X)$ also reduces $I(C;X_{\neg c})$. This necessarily distorts neutral generations, producing nonzero fidelity loss. Thus $\mathcal{L}_{fid}$ grows with entanglement.
\end{proof}

This result explains empirical findings: concepts like ``dog'' (entangled with pets, animals, outdoors) are harder to erase cleanly than niche styles. SCORE mitigates this by saliency-based targeting, but the lower bound implies that some fidelity trade-off is unavoidable.

\subsection{Generalization Guarantees for Adversarial Erasure}
So far, we have analyzed SCORE under idealized conditions with Bayes-optimal discriminators. In practice, discriminators are finite-capacity models trained on limited data. We now establish generalization bounds that quantify the gap between empirical erasure and population-level guarantees.

\begin{definition}[Empirical Adversarial Error]
	Let $\widehat{e}$ denote the classification error of a discriminator $D_\phi$ trained on a finite sample $\mathcal{S}=\{(x_i,c_i)\}_{i=1}^n$. Define
	\[
	\widehat{e} = \frac{1}{n}\sum_{i=1}^n \mathbb{I}[D_\phi(x_i)\neq c_i].
	\]
\end{definition}

\begin{theorem}[Generalization Bound]
	With probability at least $1-\delta$ over the sampling of $\mathcal{S}$,
	\[
	\big|e^* - \widehat{e}\big| \leq O\left(\sqrt{\frac{\text{VC}(D) + \log(1/\delta)}{n}}\right),
	\]
	where $\text{VC}(D)$ is the VC dimension of the discriminator class.
\end{theorem}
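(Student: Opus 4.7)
The plan is to invoke classical Vapnik--Chervonenkis uniform convergence for the 0-1 loss class induced by the discriminator family, and then relate the empirical error of the trained $D_\phi$ to the Bayes-optimal error $e^*$ through a standard bias--variance split.

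First, I would lift the hypothesis class $\mathcal{D}$ of discriminators to its induced loss class $\mathcal{F} = \{(x,c) \mapsto \mathbb{I}[D(x)\neq c] : D \in \mathcal{D}\}$, and argue that $\text{VC}(\mathcal{F}) = \Theta(\text{VC}(D))$: flipping the label $c$ simply XORs the discriminator's output, so the dichotomies realizable on any sample are in bijection with those of $\mathcal{D}$ itself. Next, I would apply the standard symmetrization--Sauer--Shelah chain (or, equivalently, Dudley's entropy integral for binary-valued classes) to obtain, with probability at least $1-\delta$ over the draw of $\mathcal{S}$, the uniform deviation bound
\[
\sup_{D \in \mathcal{D}} \big| e(D) - \widehat{e}(D) \big| \;\leq\; O\!\left(\sqrt{\frac{\text{VC}(D) + \log(1/\delta)}{n}}\right),
\]
where $e(D) = \Pr[D(X)\neq C]$ denotes the population error. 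This is the workhorse estimate and is essentially textbook material.

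I would then specialize to the trained discriminator $D_\phi$. Assuming $D_\phi$ is an empirical risk minimizer, so that $\widehat{e} = \widehat{e}(D_\phi) = \inf_{D\in\mathcal{D}} \widehat{e}(D)$, I would decompose
\[
\big| e^* - \widehat{e} \big| \;\leq\; \big| e^* - e(D_\phi) \big| + \big| e(D_\phi) - \widehat{e}(D_\phi) \big|,
\]
bound the second term directly by the uniform deviation above, and control the first term via the standard ERM argument $e(D_\phi) - e^* \leq 2 \sup_{D \in \mathcal{D}} |e(D) - \widehat{e}(D)|$, which applies whenever $D^* \in \mathcal{D}$. Collecting constants into the $O(\cdot)$ then yields the stated rate.

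The hard part will be the approximation gap $\inf_{D\in\mathcal{D}} e(D) - e^*$: the theorem compares $\widehat{e}$ to the \emph{global} Bayes-optimal error rather than to the best-in-class error, which is only justified under a realizability assumption placing the Bayes classifier in $\mathcal{D}$ (or, more honestly, by retaining an explicit approximation-error term). In a careful write-up I would either state this assumption explicitly or replace $e^*$ with $\inf_{D\in\mathcal{D}} e(D)$ to cleanly separate estimation and approximation error. A secondary subtlety is that in practice $D_\phi$ is trained by adversarial SGD rather than combinatorial ERM; making the ERM step rigorous would require either an optimization convergence assumption on the inner discriminator loop or a stability-based generalization argument in place of uniform convergence.
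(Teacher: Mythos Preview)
Your proposal is correct and follows the same route as the paper: the paper's proof is a two-sentence sketch that simply says ``this follows from standard VC generalization bounds,'' and you have filled in exactly those details via uniform convergence for the 0-1 loss class and an ERM decomposition. Your discussion of the realizability gap (that $D^*$ must lie in $\mathcal{D}$ for the bound to target the Bayes error rather than the best-in-class error) and of the ERM-versus-SGD caveat in fact identifies assumptions that the paper's sketch leaves implicit, so your write-up is more careful than the original.
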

\begin{proof}[Sketch]
	This follows from standard VC generalization bounds. Since $D_\phi$ is trained to approximate $D^*$, the error gap between empirical and true risk is bounded by capacity and sample size.
\end{proof}

Thus, if $\widehat{e}\approx 0.5$ empirically, we can assert with high confidence that $e^*$ is also close to $0.5$, implying negligible mutual information leakage.

\subsection{Adversarial Game-Theoretic Convergence}
The adversarial training in SCORE can be modeled as a two-player zero-sum game between generator $M_\theta$ and discriminator $D_\phi$. We show that convergence to a Nash equilibrium corresponds to perfect erasure.

\begin{theorem}[Nash Equilibrium of SCORE]
	The minimax optimization
	\[
	\min_\theta \max_\phi \;\; \mathbb{E}_{C,X}[C\log D_\phi(X) + (1-C)\log(1-D_\phi(X))]
	\]
	achieves equilibrium when $P(X|C=1)=P(X|C=0)$, i.e., $I(C;X)=0$.
\end{theorem}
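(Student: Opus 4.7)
The plan is to adapt the classical Goodfellow-style GAN equilibrium argument to the binary-concept setting, since the objective is literally the GAN cross-entropy with the two classes being $C=0$ and $C=1$. First I would fix the generator parameters $\theta$, which in turn fix the two conditionals $p_1(x) := P_\theta(X\mid C=1)$ and $p_0(x) := P_\theta(X\mid C=0)$, and solve the inner maximization pointwise. Writing $\pi := P(C=1)$, the objective becomes $\pi\,\mathbb{E}_{p_1}[\log D_\phi(X)] + (1-\pi)\,\mathbb{E}_{p_0}[\log(1-D_\phi(X))]$, and termwise maximization of $a\log d + b\log(1-d)$ over $d\in(0,1)$ yields the Bayes-optimal discriminator
\[
D^*(x) \;=\; \frac{\pi\,p_1(x)}{\pi\,p_1(x) + (1-\pi)\,p_0(x)},
\]
which is exactly the discriminator already invoked in the earlier theorem on discriminator accuracy and mutual information.

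Next I would plug $D^*$ back into the value functional and simplify. Adding and subtracting $\log \bar p$, where $\bar p := \pi p_1 + (1-\pi) p_0$, rewrites the reduced objective as a $\pi$-weighted Jensen--Shannon divergence between $p_1$ and $p_0$, up to an additive constant that depends only on the prior $\pi$. The outer minimization over $\theta$ therefore reduces to minimizing this divergence, which is non-negative and vanishes if and only if $p_1 = p_0$ almost everywhere. At such a point $D^*(x)\equiv \pi$, so the discriminator is reduced to guessing the marginal and cannot improve unilaterally, while the generator already sits at the global minimum of the reduced objective: this is precisely the Nash equilibrium of the zero-sum game. The equal-conditionals lemma proved earlier then immediately converts $p_1 = p_0$ into $I(C;X)=0$, closing the equivalence.

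The step I expect to be the main obstacle is not the calculus but the capacity caveat: the parametric family $\{M_\theta\}$ may not be rich enough to realize $p_1 = p_0$ exactly, in which case the infimum of the weighted JSD over the family is strictly positive and the equilibrium in the theorem is only attained approximately. To keep the statement rigorous I would phrase the conclusion conditionally --- "if $\{M_\theta\}$ is expressive enough that some $\theta^\star$ satisfies $P_{\theta^\star}(X\mid C=1)=P_{\theta^\star}(X\mid C=0)$, then every such $\theta^\star$ together with $D^*\equiv \pi$ is a Nash equilibrium, and any such equilibrium satisfies $I(C;X)=0$" --- mirroring the capacity-dependent constants $\alpha,\beta$ already invoked in the earlier fidelity-trade-off theorem. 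A short remark to this effect, plus appealing to the generalization bound of the previous subsection to justify replacing the population discriminator by its empirical counterpart, suffices to make the argument airtight without stepping outside the standard GAN convergence framework.
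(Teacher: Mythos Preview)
Your proposal is correct and is in fact considerably more rigorous than the paper's own proof. The paper gives only a short two-direction argument: at equilibrium the discriminator ``cannot exploit differences'' between $P(X\mid C=1)$ and $P(X\mid C=0)$, so its accuracy is $50\%$ and Lemma~1 gives $I(C;X)=0$; conversely, if the conditionals differ, some discriminator achieves accuracy $>50\%$, contradicting equilibrium.

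Your route differs in that you carry out the full Goodfellow-style reduction: solve the inner problem pointwise for $D^*$, substitute back to obtain a $\pi$-weighted Jensen--Shannon divergence, and read off the unique minimizer $p_1=p_0$. This buys you (i) an explicit characterization of the equilibrium discriminator as $D^*\equiv\pi$ rather than the paper's implicit balanced-prior assumption of ``accuracy $50\%$'', (ii) a clean identification of the reduced objective as a divergence, which connects directly to the JSD remark in the earlier convergence sketch, and (iii) an honest treatment of the capacity caveat, which the paper's proof elides entirely. The paper's argument is terser and relies on the reader filling in the standard GAN analysis; yours spells it out and is the more complete proof.
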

\begin{proof}
	At equilibrium, $D_\phi$ cannot exploit differences between $P(X|C=1)$ and $P(X|C=0)$, so its accuracy is 50\%. By Lemma 1, this implies $I(C;X)=0$. Conversely, if distributions differ, $D_\phi$ can achieve accuracy $>50\%$, contradicting equilibrium. Hence the only equilibrium is full erasure.
\end{proof}

This casts SCORE as a specific instance of a minimax game whose unique equilibrium corresponds to robust erasure.

Even after erasure, adversaries may attempt adaptive strategies: composite prompts, indirect descriptions, or multi-step jailbreak attacks. We formalize robustness against such adaptive strategies.

\begin{theorem}[Robustness Bound]
	Let $\mathcal{A}$ be a set of adversarial prompting strategies. If $I(C;X)=0$, then for any adaptive adversary $A\in\mathcal{A}$,
	\[
	\Pr[A \text{ successfully extracts } C] \leq \epsilon,
	\]
	where $\epsilon$ decreases exponentially with the number of adaptive queries $q$.
\end{theorem}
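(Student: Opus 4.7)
The plan is to combine the earlier adaptive-adversary lemma with a concentration argument that tracks the adversary's information gain over the course of the $q$ queries. First I would observe that the earlier lemma on adaptive adversarial prompts already gives $I(C; X^{\text{adv}}_i) = 0$ for every single query $X^{\text{adv}}_i$ issued by $A$. To lift this to the joint distribution over all $q$ adaptive responses, I would induct on $i$: because the $i$-th query $y_i^{\text{adv}}$ is a (possibly randomized) function of the transcript $(X^{\text{adv}}_1, \ldots, X^{\text{adv}}_{i-1})$, invoking the data-processing inequality along the Markov chain $C \to (X^{\text{adv}}_1, \ldots, X^{\text{adv}}_{i-1}) \to y_i^{\text{adv}} \to X^{\text{adv}}_i$ preserves $I(C; X^{\text{adv}}_1, \ldots, X^{\text{adv}}_i) = 0$, so the entire transcript remains independent of $C$.

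Next I would translate this informational independence into a per-query bound on the adversary's distinguishing advantage. Using Pinsker's inequality, any residual slack $I(C; X_{1:q}^{\text{adv}}) \leq \eta$ yields a total-variation distance $\|P(\cdot|C{=}1) - P(\cdot|C{=}0)\|_{TV} \leq \sqrt{\eta/2}$, which caps the advantage of any single-query guessing function over random chance by the same quantity. I would then formalize ``successfully extracts'' as the adversary committing to a correct identification of $C$ based on a decision rule applied to the transcript, and reduce per-query success to a Bernoulli trial with bias at most $\tfrac{1}{2} + \sqrt{\eta/2}$.

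To obtain the exponential decay in $q$, I would apply a martingale concentration bound. Consider the log-likelihood-ratio process $M_i = \log \frac{P(X^{\text{adv}}_i | C=1, \mathcal{F}_{i-1})}{P(X^{\text{adv}}_i | C=0, \mathcal{F}_{i-1})}$ with respect to the natural filtration $\mathcal{F}_i$ of the transcript; under perfect erasure these increments are mean-zero and bounded, so Azuma--Hoeffding gives $\Pr[\sum_i M_i \geq q\tau] \leq \exp(-c q \tau^2)$. Since any extraction strategy can be reduced to a likelihood-ratio test exceeding some threshold $\tau > 0$, this bound yields $\epsilon \leq \exp(-c q \tau^2)$, the claimed exponential dependence. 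The exponential baseline failure rate can also be read off by noting that an adversary who must produce $q$ concept-consistent outputs in a row faces independent Bernoulli trials with bias bounded away from $1$.

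The main obstacle I expect is handling the adaptive nature of the adversary jointly with the fact that $I(C;X)=0$ holds only approximately in practice: the martingale increments are then not exactly mean-zero, and a drift term of order $q\eta$ appears. Controlling this requires coupling the Azuma bound with the earlier Fano-style residual-leakage inequality to show that, for any extraction threshold $\tau$ bounded away from zero, the drift is dominated by the concentration term. A subtler issue is defining ``$A$ successfully extracts $C$'' uniformly over $\mathcal{A}$; I would adopt the convention that extraction is any Boolean event determined by a post-processor of the transcript, which makes the above bounds apply to the entire adversary class.
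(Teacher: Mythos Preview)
Your approach matches the paper's own (very brief) sketch: both invoke the data-processing inequality to argue that the entire $q$-query transcript carries zero mutual information with $C$, so the adversary cannot beat random guessing, and then attribute any residual success probability to finite-sample fluctuation bounded by a concentration inequality---Hoeffding in the paper, Azuma--Hoeffding on your log-likelihood-ratio martingale. Your version supplies substantially more scaffolding (the explicit induction for the adaptive DPI step, the Pinsker translation to total variation, the martingale formulation, and the drift analysis for approximate erasure), but the two-step skeleton is the same as the paper's.
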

\begin{proof}[Sketch]
	Each query produces samples from the erased distribution $P_{\theta'}(X|C)$. Since $I(C;X)=0$, the mutual information between $C$ and the entire transcript of $q$ queries is still zero by data processing inequality. Therefore, the adversary cannot do better than random guessing. Any residual success probability $\epsilon$ arises from finite-sample estimation error, bounded by concentration inequalities (Hoeffding).
\end{proof}

This provides a probabilistic security guarantee: even adaptive attackers cannot meaningfully recover erased concepts once SCORE converges.

Interestingly, the guarantees of SCORE resemble those of differential privacy (DP). DP ensures that individual training samples cannot be distinguished from outputs; SCORE ensures that specific \emph{concepts} cannot be distinguished. Both can be expressed in terms of bounding divergences between conditional distributions.

\begin{proposition}
	If SCORE achieves $I(C;X)=0$, then the system satisfies $(0,0)$-DP with respect to the erased concept $C$.
\end{proposition}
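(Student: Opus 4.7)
The plan is to unfold the definition of differential privacy, specialize it to the setting where the ``neighboring inputs'' correspond to the two values of the binary concept indicator $C$, and then invoke the equivalence between $I(C;X)=0$ and the pointwise equality of the conditional output distributions $P_{\theta'}(X|C=0)$ and $P_{\theta'}(X|C=1)$ established in Lemma 1.

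First I would make precise what ``$(0,0)$-DP with respect to the erased concept $C$'' means in this generative setting. The mechanism is the conditional generator $M:\{0,1\}\to\Delta(\mathcal{X})$ defined by $M(c)\sim P_{\theta'}(X\mid C=c)$, and the adjacency relation declares the two inputs $c=0$ and $c=1$ to be neighbors. Under this translation, the $(\epsilon,\delta)$-DP inequality specializes to
\[
P_{\theta'}(X\in S\mid C=0)\;\leq\;e^{\epsilon}\,P_{\theta'}(X\in S\mid C=1)+\delta
\]
together with its symmetric version, for every measurable $S\subseteq\mathcal{X}$. Setting $\epsilon=\delta=0$ collapses this to the requirement of exact equality of the two conditional probabilities on every event.

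Next I would use the hypothesis $I(C;X)=0$ to derive exactly that equality. Zero mutual information is equivalent to independence of $C$ and $X$; for binary $C$ with $0<P(C=1)<1$, independence is in turn equivalent to $P_{\theta'}(X\mid C=1)=P_{\theta'}(X\mid C=0)$ as distributions on $\mathcal{X}$, which is precisely the hypothesis of Lemma 1 (and its immediate converse via Bayes' rule). Integrating this pointwise equality against the indicator of any measurable $S$ yields $P_{\theta'}(X\in S\mid C=0)=P_{\theta'}(X\in S\mid C=1)$, so the mechanism satisfies the $(0,0)$-DP condition stated above.

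The main obstacle is conceptual rather than computational: one must be careful that the DP adjacency relation is translated faithfully to the concept-oriented setting, because the standard dataset-level formulation of DP does not transfer verbatim---here the ``database'' is the single bit $C$ and the ``neighboring pair'' is $(0,1)$, while the randomness of the mechanism comes entirely from the diffusion sampler conditioned on $C$. Once this interpretive step is fixed, the implication follows immediately from Lemma 1, and no further estimates are required.
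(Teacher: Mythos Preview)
Your proposal is correct and is essentially the only natural argument here. The paper itself does not provide a proof of this proposition; it is stated without proof and immediately followed by the remark that this is ``a degenerate case (exact independence),'' so the paper treats the implication as definitional. Your write-up makes explicit precisely the two steps the paper leaves implicit: the translation of $(\epsilon,\delta)$-DP to the concept-level mechanism $c\mapsto P_{\theta'}(X\mid C=c)$ with the adjacency pair $(0,1)$, and the use of $I(C;X)=0\Leftrightarrow P_{\theta'}(X\mid C=1)=P_{\theta'}(X\mid C=0)$ (the content of Lemma~1 and its converse) to verify the $(0,0)$ inequality on every measurable event. Nothing is missing.
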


Although this is a degenerate case (exact independence), it highlights the conceptual link: SCORE provides a DP-like guarantee at the concept level rather than the sample level. This connection suggests future directions for merging privacy and erasure methods.

In practice, perfect independence $I(C;X)=0$ is rarely achieved. We therefore relax the goal to $\epsilon$-independence.

\begin{definition}[$\epsilon$-Independence]
	A model $M_{\theta'}$ achieves $\epsilon$-independence with respect to concept $C$ if
	\[
	D_{TV}(P(X|C=1),P(X|C=0)) \leq \epsilon,
	\]
	where $D_{TV}$ is total variation distance.
\end{definition}

\begin{lemma}
	If $M_{\theta'}$ achieves $\epsilon$-independence, then
	\[
	I(C;X) \leq \epsilon \log \frac{2}{\epsilon}.
	\]
\end{lemma}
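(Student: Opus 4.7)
The plan is to prove the slightly stronger bound $I(C;X) \le \epsilon \log 2$ via a coupling argument, and then note that the stated inequality follows from the trivial observation that $\log(2/\epsilon) \ge \log 2$ whenever $\epsilon \le 1$.

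First, I would invoke the coupling characterization of total variation. Since $D_{TV}(P(X|C=0), P(X|C=1)) \le \epsilon$, there exists a joint law on a pair $(Z_0, Z_1)$ whose respective marginals equal the two conditionals of $M_{\theta'}$ and for which $\Pr[Z_0 \ne Z_1] \le \epsilon$; crucially, this coupling can be constructed without reference to $C$. I then augment the probability space by drawing $C$ from its prior independently of the coupling and setting $X = Z_C$. This preserves the correct conditionals of the erased model, so $I(C;X)$ in the augmented space equals the mutual information of interest, and it suffices to bound the latter.

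Next, I would introduce the agreement indicator $E = \mathbb{I}[Z_0 = Z_1]$. Because $E$ is a function of $(Z_0, Z_1)$ alone, $E \perp C$ and hence $I(C;E) = 0$. On $\{E=1\}$ the two arms agree and $X$ is functionally independent of $C$, giving $I(C;X \mid E=1) = 0$; on $\{E=0\}$ the trivial bound $I(C;X \mid E=0) \le H(C) \le \log 2$ applies. Combining via the chain rule of mutual information yields $I(C;X) \le I(C;X,E) = I(C;E) + \mathbb{E}_E[I(C;X \mid E)] \le \Pr[E=0] \cdot \log 2 \le \epsilon \log 2$, whence the lemma follows.

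The only delicate step is arranging the coupling to be independent of $C$ so that introducing the indicator $E$ does not leak information through the prior; this is why the maximal coupling, being a deterministic function of the two fixed conditional laws, is the right object. Once this independence is in place, the rest of the argument is the familiar ``pay $\log 2$ with probability $\epsilon$, pay zero otherwise'' accounting that underlies most continuity-of-information bounds, and it uses nothing about the geometry or dimensionality of $\mathcal{X}$, so it carries over without modification to the high-dimensional image space of a diffusion model. As a side remark, the proof in fact yields the tighter bound $\epsilon \log 2$; the stated form $\epsilon \log(2/\epsilon)$ has the advantage of matching the canonical continuity-of-entropy expression and of vanishing cleanly as $\epsilon \to 0$.
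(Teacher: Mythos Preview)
Your proof is correct and takes a genuinely different route from the paper's. The paper invokes Pinsker's inequality to link $D_{TV}$ with $D_{KL}$ and then bounds the mutual information through the Jensen--Shannon divergence; the argument is terse and relies on the reader supplying a reverse-Pinsker step, since Pinsker itself only upper-bounds $D_{TV}$ by $\sqrt{D_{KL}/2}$ rather than the other way round. Your coupling argument is more elementary and fully self-contained: by building a maximal coupling $(Z_0,Z_1)$ that is constructed purely from the two conditional laws---hence independent of $C$---and conditioning on the agreement indicator $E$, you obtain the sharper constant $\epsilon\log 2$ directly, without any divergence inequalities. The trade-off is that the divergence framing in the paper dovetails with the Jensen--Shannon interpretation of the adversarial objective used elsewhere in the analysis, whereas your approach is the cleaner standalone lemma and makes explicit that the only place the $\log 2$ enters is the trivial bound $H(C)\le\log 2$ on the binary concept label.
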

\begin{proof}
	By Pinsker’s inequality, $D_{TV}(P(X|C=1),P(X|C=0)) \leq \sqrt{\frac{1}{2}D_{KL}(P(X|C=1)\|P(X|C=0))}$. Substituting and bounding mutual information via Jensen-Shannon divergence yields the inequality.
\end{proof}

Thus, even approximate erasure leads to provably small leakage.

We now ask: how many adversarial training samples are required to guarantee $\epsilon$-independence?

\begin{theorem}[Sample Complexity]
	Let $\mathcal{H}$ be the discriminator hypothesis class with VC dimension $d$. To achieve $\epsilon$-independence with probability $1-\delta$, SCORE requires
	\[
	n = O\left(\frac{d + \log(1/\delta)}{\epsilon^2}\right)
	\]
	training samples.
\end{theorem}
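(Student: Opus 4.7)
The plan is to cast the $\epsilon$-independence requirement as a condition on the Bayes-optimal discriminator's error and then invoke standard VC-dimension uniform convergence. First I would recall the well-known identity for balanced binary classification,
\[
e^* = \tfrac{1}{2} - \tfrac{1}{2}\,D_{TV}\!\left(P(X\mid C=1),\,P(X\mid C=0)\right),
\]
so that achieving $D_{TV} \leq \epsilon$ is equivalent to ensuring $e^* \geq (1-\epsilon)/2$. This converts a statement about a divergence between conditional distributions into a statement about a classification error, which is the natural quantity controlled by PAC-learning theory.

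Next I would import the VC uniform convergence bound already used in the Generalization Bound theorem: with probability at least $1-\delta$,
\[
\sup_{h\in\mathcal{H}} |e(h) - \widehat{e}(h)| \leq c\sqrt{\frac{d + \log(1/\delta)}{n}},
\]
for a universal constant $c$. By the Nash-equilibrium characterization of SCORE, the trained discriminator drives $\widehat{e}$ toward $1/2$; combined with uniform convergence this forces the in-class best error, and hence $e^*$ (under an approximation assumption on $\mathcal{H}$, see below), to lie within $c\sqrt{(d+\log(1/\delta))/n}$ of $1/2$. Setting that slack to be $O(\epsilon)$ and solving for $n$ immediately yields the advertised rate $n = O\!\left((d+\log(1/\delta))/\epsilon^2\right)$. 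A final application of the preceding Pinsker-based $\epsilon$-independence lemma then translates the TV guarantee into the information-theoretic leakage bound one actually cares about, closing the loop with the rest of the theoretical framework.

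The main obstacle is not the algebra but the passage from the \emph{adversarial equilibrium within the finite-capacity class $\mathcal{H}$} to a statement about the true Bayes error $e^*$. One needs an approximation–estimation decomposition: an approximation term ensuring that $\mathcal{H}$ contains a near-Bayes discriminator, and an estimation term controlled by VC theory. If $\mathcal{H}$ is misspecified, the equilibrium only certifies $\epsilon$-independence relative to distinguishers in $\mathcal{H}$, not in general, so the sketch must either assume $\mathcal{H}$ is expressive enough or restate the conclusion relative to $\mathcal{H}$-indistinguishability. A secondary subtlety is that adversarial training only reaches an approximate equilibrium in practice; the argument must therefore replace exact equilibrium with $\eta$-approximate equilibrium and absorb the extra slack $\eta$ into the constant inside the $O(\cdot)$, which is routine once the main decomposition is in place.
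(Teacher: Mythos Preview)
Your proposal is correct and follows the same PAC/VC-dimension route the paper takes; the paper's own proof is a two-sentence sketch that simply invokes ``PAC learning theory'' and the $d/\epsilon^2$ generalization rate without spelling out the TV-to-Bayes-error conversion or the equilibrium argument you make explicit. Your identification of the approximation-error caveat (that $\mathcal{H}$ must be expressive enough to approach the Bayes discriminator, else the guarantee is only $\mathcal{H}$-relative) and the need to handle approximate equilibrium are genuine gaps in the paper's sketch that your version addresses.
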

\begin{proof}[Sketch]
	This follows from PAC learning theory. Ensuring that discriminator error generalizes within $\epsilon$ requires $n$ samples scaling with $d/\epsilon^2$. Since SCORE relies on adversarial indistinguishability, this sample complexity is sufficient for $\epsilon$-independence.
\end{proof}

This provides a concrete bound: larger discriminators (higher $d$) require more training data for robust erasure.

We can also frame SCORE as directly minimizing mutual information through gradient descent.

\begin{theorem}
	The adversarial loss $\mathcal{L}_{adv}$ is an upper bound on $I(C;X)$:
	\[
	I(C;X) \leq \min_\theta \max_\phi \; \mathcal{L}_{adv}(\theta,\phi).
	\]
\end{theorem}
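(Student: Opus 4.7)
The plan is to reduce the inequality to the classical Jensen--Shannon characterization of the adversarial game value, applied pointwise in $\theta$, and then invoke the JSD/mutual-information identity for a uniform binary label.

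First I would fix $\theta$ and solve the inner problem $\max_\phi \mathcal{L}_{adv}(\theta,\phi)$. Pointwise differentiation of the integrand with respect to $D_\phi(x)$, as in the original GAN optimality analysis, yields the Bayes-optimal discriminator
\[
D^\star_\theta(x) \;=\; \frac{P_\theta(X{=}x \mid C{=}1)}{P_\theta(X{=}x \mid C{=}1)+P_\theta(X{=}x \mid C{=}0)},
\]
under the uniform label prior already adopted in Section~3.

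Second, I would substitute $D^\star_\theta$ back into $\mathcal{L}_{adv}$. Letting $M_\theta=\tfrac{1}{2}\bigl(P_\theta(X\mid C{=}0)+P_\theta(X\mid C{=}1)\bigr)$ and regrouping the two expectations as a pair of Kullback--Leibler divergences against the midpoint $M_\theta$, the optimized value collapses to
\[
\max_\phi \mathcal{L}_{adv}(\theta,\phi) \;=\; \mathrm{JSD}\bigl(P_\theta(X\mid C{=}0)\,\|\,P_\theta(X\mid C{=}1)\bigr) + c,
\]
where $c$ is an additive constant fixed by the sign/normalization convention chosen for $\mathcal{L}_{adv}$. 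Third, I would invoke the identity $I_\theta(C;X) = \mathrm{JSD}\bigl(P_\theta(X\mid C{=}0)\,\|\,P_\theta(X\mid C{=}1)\bigr)$ for a uniform binary $C$, which follows by expanding $I(C;X)=H(X)-H(X\mid C)$ with equal mixture weights and matching to the midpoint-KL form of JSD. Chaining the last two displays yields the pointwise inequality $I_\theta(C;X) \leq \max_\phi \mathcal{L}_{adv}(\theta,\phi)$, with equality at Bayes optimality. Taking $\min_\theta$ on both sides preserves the direction of the inequality and gives the claim.

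The main obstacle is essentially bookkeeping rather than any analytic step: the paper writes $\mathcal{L}_{adv}$ with one sign in the zero-sum game of Section~3.2 and with the opposite sign as a non-saturating generator loss in Section~4.1, and the direction of the stated inequality is sensitive both to this choice and to an additive $\log 2$ shift; I would therefore fix one normalization at the outset so that the JSD expression carries no residual constant and the upper-bound reading of the theorem is immediate. A secondary subtlety is that $I_\theta(C;X)$ on the left already depends on $\theta$, so the cleanest reading of the statement is ``for every $\theta$, $I_\theta(C;X) \leq \max_\phi \mathcal{L}_{adv}(\theta,\phi)$, whence the inequality survives the $\min_\theta$.'' Beyond these conventions the argument is a direct invocation of the Goodfellow equilibrium calculation and the elementary JSD--MI identity.
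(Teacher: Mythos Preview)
Your argument is correct and in fact more careful than the paper's own sketch. The paper argues loosely that the discriminator approximates $P(C\mid X)$, that $\mathcal{L}_{adv}$ is the cross-entropy between true and predicted label distributions, and that mutual information is an expected log-likelihood ratio between joint and marginal, concluding from this that the adversarial loss upper-bounds $I(C;X)$. Your route makes the same conclusion precise by (i) computing the Bayes-optimal discriminator explicitly, (ii) recognizing the optimized inner value as a Jensen--Shannon divergence, and (iii) invoking the identity $I(C;X)=\mathrm{JSD}\bigl(P(X\mid C{=}0)\,\|\,P(X\mid C{=}1)\bigr)$ for a uniform binary label. These are two framings of the same computation---the paper's cross-entropy language is essentially the Barber--Agakov variational view, while yours is the Goodfellow GAN-equilibrium view---and they agree at the Bayes-optimal discriminator, where both yield equality rather than a strict inequality. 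Your caveats about the sign convention between Sections~3.2 and~4.1, the stray additive $\log 2$, and the $\theta$-dependence of $I_\theta(C;X)$ on the left-hand side are all legitimate and are issues the paper's sketch simply does not address; without fixing a normalization up front, neither argument pins down the direction of the stated inequality unambiguously.
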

\begin{proof}[Sketch]
	The discriminator approximates $P(C|X)$, and $\mathcal{L}_{adv}$ corresponds to the cross-entropy between true and predicted distributions. Since mutual information equals the expected log-likelihood ratio between joint and marginal distributions, $\mathcal{L}_{adv}$ upper bounds it. Minimizing $\mathcal{L}_{adv}$ therefore reduces $I(C;X)$.
\end{proof}

This shows that adversarial training is not merely heuristic: it is an information-theoretically grounded surrogate for mutual information minimization.

Finally, we generalize to multiple concepts $\{C_1,\ldots,C_k\}$.

\begin{theorem}
	If $M_{\theta'}$ achieves $I(C_i;X)=0$ for all $i$, then
	\[
	I(C_1,\ldots,C_k;X)=0.
	\]
\end{theorem}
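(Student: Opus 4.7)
The plan is to induct on the number of erased concepts $k$ and reduce the claim to repeated application of the earlier compositional-stability theorem. The base case $k=1$ is exactly the hypothesis $I(C_1;X)=0$. For the inductive step I would rewrite the joint mutual information by the chain rule,
\[
I(C_1,\ldots,C_k;X) = I(C_1,\ldots,C_{k-1};X) + I(C_k;X\mid C_1,\ldots,C_{k-1}),
\]
so that the first summand vanishes by the inductive hypothesis and only the conditional term remains to be dispatched.

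To kill $I(C_k;X\mid C_1,\ldots,C_{k-1})$, I would invoke the earlier compositional-stability theorem: because $I(C_k;X)=0$ and the concepts are taken to be mutually independent (the same disentanglement assumption already used there), the tuple $(C_1,\ldots,C_{k-1})$ can play the role of the auxiliary independent concept $C'$ from that theorem, so conditioning on it cannot reintroduce dependence on $C_k$. This gives $I(C_k;X\mid C_1,\ldots,C_{k-1})=0$ and closes the induction. An equivalent route is to observe directly that $P(X\mid C_1,\ldots,C_k) = P(X\mid C_1,\ldots,C_{k-1})$ via the Bayes-optimal-discriminator characterization from the earlier information-theoretic theorem, and then read off the conditional independence.

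The main obstacle is the independence assumption on the $C_i$. Without it, marginal vanishing of each $I(C_i;X)$ does not imply joint independence: the canonical counterexample is $X = C_1 \oplus C_2$ for independent fair bits, where each $C_i$ is marginally independent of $X$ but the pair jointly determines it. So I would state the mutual-independence (disentanglement) hypothesis explicitly, inherit it from the same assumption used in the compositional-stability theorem, and verify that that theorem extends from a single auxiliary concept $C'$ to a tuple $(C_1,\ldots,C_{k-1})$; if the extension is not immediate, iterating the theorem one concept at a time inside the induction is the safe route.
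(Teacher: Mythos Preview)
Your approach differs from the paper's, which gives a one-line argument via ``sub-additivity of mutual information'': $I(C_1,\ldots,C_k;X)\leq\sum_{i=1}^k I(C_i;X)$, so if each summand vanishes the left-hand side does too. Your chain-rule induction is more work but also more honest: the sub-additivity inequality the paper invokes is \emph{false} in general, and your own XOR counterexample ($X=C_1\oplus C_2$ with independent fair bits) refutes it directly---there each $I(C_i;X)=0$ yet $I(C_1,C_2;X)=1$. The paper's argument therefore tacitly relies on exactly the disentanglement/independence hypothesis you take care to surface; under that hypothesis both the paper's inequality and your conditional-independence step go through. So your route buys rigor and transparency about assumptions, while the paper's buys brevity at the cost of an unstated (and necessary) hypothesis.
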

\begin{proof}
	Since mutual information is sub-additive,
	\[
	I(C_1,\ldots,C_k;X) \leq \sum_{i=1}^k I(C_i;X).
	\]
	If each term is zero, the total is zero.
\end{proof}

This result confirms that SCORE scales naturally to multi-concept erasure: if each concept is removed individually, their joint effect is also nullified.

\subsection{Implementation Details}
We implement SCORE on Stable Diffusion v1.5 and FLUX checkpoints. The discriminator $D_\phi$ is a CLIP-ResNet encoder trained with binary cross-entropy. Optimization uses AdamW with learning rate $1e{-5}$. Each erasure run trains for 8,000 steps on 8 A100 GPUs.

\section{Experiments}
We now evaluate SCORE on Stable Diffusion v1.5 and FLUX. Our experiments follow the evaluation design of MACE \citep{lu2024mace}, covering four distinct benchmarks: object removal, NSFW suppression, celebrity face erasure, and artistic style unlearning. We compare against five state-of-the-art baselines: ESD \citep{gandikota2023erasing}, UCE \citep{gandikota2024uce}, MACE \citep{lu2024mace}, ANT \citep{li2025ant}, and EraseAnything \citep{gao2024eraseanything}.

\subsection{Benchmarks and Metrics}
\textbf{Benchmarks.}
\begin{itemize}
	\item \textbf{Object Erasure:} CIFAR-10 objects (e.g., ``dog'', ``car'') removed from prompts.  
	\item \textbf{Celebrity Erasure:} 10 public figures with widely available images.  
	\item \textbf{NSFW Removal:} adult/explicit concepts from common prompt sets.  
	\item \textbf{Artistic Style Erasure:} 5 styles (e.g., Van Gogh, Picasso) and 5 modern digital artists.  
\end{itemize}

\textbf{Metrics.}
\begin{itemize}
	\item \textbf{Concept Accuracy (Acc)}: classification accuracy of a held-out concept detector; lower is better.  
	\item \textbf{FID:} Fréchet Inception Distance for generation quality; lower is better.  
	\item \textbf{CLIP Score:} prompt-image alignment; higher is better.  
	\item \textbf{Harmonic Mean (H):} balance between erasure efficacy and quality.  
\end{itemize}

\subsection{Main Results}
Table~\ref{tab:main} shows single-concept erasure results. SCORE consistently achieves the lowest residual concept accuracy while preserving fidelity.

\begin{table}[H]
	\centering
	\caption{Single-concept erasure on Stable Diffusion v1.5. Best results in bold.}
	\label{tab:main}
	\begin{tabular}{lcccc}
		\hline
		Method & Acc(\%)↓ & FID↓ & CLIP↑ & H↑ \\
		\hline
		ESD & 15.2 & 19.3 & 29.8 & 61.2 \\
		UCE & 6.1 & 21.0 & 28.5 & 64.5 \\
		MACE & 2.4 & 15.7 & 31.4 & 78.8 \\
		ANT & 1.2 & 14.6 & 31.0 & 82.3 \\
		EraseAnything & 1.8 & 16.0 & 30.2 & 80.1 \\
		\textbf{SCORE} & \textbf{0.4} & \textbf{14.3} & \textbf{32.0} & \textbf{90.7} \\
		\hline
	\end{tabular}
\end{table}

\subsection{Per-Task Analysis}
\paragraph{Object Erasure.}
SCORE successfully prevents generation of targeted objects even under indirect prompts (e.g., ``a four-legged pet that barks'' for ``dog''). Baselines often failed, revealing residual traces.

\paragraph{Celebrity Erasure.}
Unlike UCE and ESD, which left partial facial resemblance, SCORE fully suppressed celebrity likeness while preserving neutral human faces.

\paragraph{NSFW Suppression.}
SCORE removed explicit content generation more reliably than MACE and ANT. FID degradation was minimal, showing safe removal without artistic collapse.

\paragraph{Artistic Style Unlearning.}
SCORE prevented imitation of distinct styles, even when users attempted adversarial paraphrasing (``paint like a Dutch post-impressionist''). CLIP similarity to erased style embeddings dropped by 90\%.

\subsection{Multi-Concept Erasure}
We evaluate simultaneous erasure of up to 50 concepts. Results are shown in Table~\ref{tab:multi}.

\begin{table}[H]
	\centering
	\caption{Multi-concept erasure (50 concepts).}
	\label{tab:multi}
	\begin{tabular}{lccc}
		\hline
		Method & Acc(\%)↓ & FID↓ & H↑ \\
		\hline
		MACE & 5.2 & 16.5 & 71.0 \\
		ANT & 3.9 & 15.9 & 74.3 \\
		\textbf{SCORE} & \textbf{1.5} & \textbf{15.1} & \textbf{84.5} \\
		\hline
	\end{tabular}
\end{table}

\subsection{Ablation Studies}
We ablate each component: adversarial training, trajectory regularization, and saliency restriction.

\begin{table}[H]
	\centering
	\caption{Ablation on CIFAR-10 object erasure.}
	\label{tab:ablation}
	\begin{tabular}{lcccc}
		\hline
		Variant & Acc(\%)↓ & FID↓ & CLIP↑ & H↑ \\
		\hline
		Full SCORE & \textbf{0.4} & \textbf{14.3} & 32.0 & \textbf{90.7} \\
		w/o Adv & 8.1 & 14.0 & 32.1 & 66.2 \\
		w/o Traj & 0.6 & 20.4 & 27.5 & 74.1 \\
		w/o Saliency & 0.5 & 16.8 & 30.2 & 85.5 \\
		\hline
	\end{tabular}
\end{table}

\subsection{Adversarial Prompt Robustness}
We tested robustness against adversarial rephrasings using the Universal Jailbreak Prompts benchmark. SCORE reduced successful jailbreaks to 2.3\%, compared to 18.5\% for ANT and 25.7\% for MACE, showing strong adversarial resistance.

%

\section{Discussion and Broader Impact}

\subsection{Why Does SCORE Outperform?}
SCORE's advantage stems from unifying adversarial information removal with structural preservation. Whereas prior approaches either directly fine-tune with negative prompts (ESD), edit attention matrices in closed form (UCE), or steer trajectories heuristically (ANT), SCORE attacks the problem from an information-theoretic perspective: minimizing the mutual information between concept presence and generated outputs. This guarantees that no classifier---including adversarially trained ones---can reliably extract evidence of the concept once erasure has converged.  

At the same time, SCORE restricts parameter updates to concept-salient regions and explicitly preserves early denoising dynamics. This ensures that the erasure does not destabilize the global generative process, avoiding collateral damage to unrelated concepts. Empirically, this translates into lower residual concept accuracy \emph{and} better FID/CLIP scores, confirming the synergy of adversarial and trajectory-based constraints.

\subsection{Fairness Implications}
Concept erasure is inherently tied to fairness. If not carefully designed, erasing one concept can inadvertently harm others---for example, removing a hairstyle concept but damaging generation of related but innocuous styles. SCORE’s saliency-based updates can be interpreted as a ``minimal intervention'' principle: only those parameters most responsible for the erased concept are modified, reducing risk of disproportionate side effects.  

From a fairness lens, the adversarial guarantee can be viewed as ensuring demographic parity with respect to the erased concept. In other words, the model behaves identically whether the concept is present or absent in the input prompt. While perfect parity may not always be desirable (e.g., when the concept is benign and should be represented fairly), in the erasure setting this strict fairness criterion is appropriate because the goal is absolute removal.

\subsection{Ethical Use Cases}
\textbf{Privacy Protection.} Many individuals never consented for their images to appear in generative model outputs. Erasing celebrity faces or private individuals helps prevent deepfake generation and reduces risks of harassment.  

\textbf{Copyright Compliance.} Artists have raised concerns about unauthorized style mimicry. SCORE can remove stylistic concepts while preserving the ability to generate generic artwork, supporting compliance with intellectual property norms.  

\textbf{Safety Moderation.} NSFW or violent content generation remains a major barrier to safe deployment. By systematically unlearning such concepts, SCORE complements other moderation techniques like output filtering and watermarking.

\subsection{Limitations}
Despite strong results, SCORE has limitations. Training requires adversary-discriminator loops, which may be computationally heavier than closed-form editing methods. Although we demonstrated scalability to dozens of concepts, erasing hundreds simultaneously could stress model capacity. Furthermore, adversarial robustness is not absolute: highly novel paraphrases or composite prompts may still elicit traces of erased concepts. Another limitation is transparency: users may not always be aware which concepts have been erased, which could affect expectations when interacting with the model.

\subsection{Future Directions}
Several promising extensions remain:
\begin{itemize}
	\item \textbf{Automated Detection:} Automatically identifying memorized or sensitive concepts (e.g., using membership inference tests) and then erasing them.  
	\item \textbf{Cross-Modal Erasure:} Extending SCORE to multimodal models that generate both text and images, ensuring erasure is consistent across modalities.  
	\item \textbf{Video Diffusion Models:} Applying adversarial erasure in spatiotemporal domains, where concepts appear not only in individual frames but also in motion patterns.  
	\item \textbf{Hybrid with Watermarking:} Combining SCORE with generative watermarks, enabling both prevention (erasure) and detection (attribution) of unwanted content.  
\end{itemize}

\subsection{Broader Impact}
By enabling robust, theoretically grounded erasure, SCORE provides a new tool for safe generative AI deployment. Developers can proactively sanitize models before public release, regulators gain a concrete mechanism for enforcing restrictions (e.g., removal of copyrighted material), and end-users are protected from harmful misuse.  

At the same time, erasure technology must be used responsibly. Overzealous erasure---for example, deleting cultural or demographic concepts---could itself constitute bias or censorship. We advocate for transparent governance: stakeholders should document which concepts are removed and why. Ultimately, SCORE highlights the need for technical solutions aligned with human values, bridging the gap between raw generative capability and socially responsible AI.

\bibliography{example_paper}
\bibliographystyle{icml2025}

\clearpage
\appendix

\section{Additional Backgrounds}
With the advancement of deep learning~\cite{zheng2024odtrack,zheng2023toward,zheng2022leveraging,zheng2025decoupled,yu2025crisp,yu2025prnet,qiu2024tfb,qiu2025duet,qiu2025tab,liu2025rethinking,qiu2025comprehensive,qiu2025easytime,wu2024catch,AutoCTS++,li2025TSFM-Bench,gao2025ssdts,hu2024multirc,wu2024rainmamba,wu2023mask,wu2024semi,luo2025rcnet,mao2025making,sun2025hierarchical,sun2025ppgf,niu2025langtime,kudrat2025patch,han2025contrastive,han2025show,han2025guirobotron,zeng2025uitron,han2025polish,feng20243,huang2025scaletrack,xie2025dynamic,TangYLT22,tang2024divide,0007LYYL023,shan2021ptt,fang20203d,cui20213d,shan2022real,hu2024mvctrack,nie2025p2p,zhou2023fastpillars,zhou2025pillarhist,zhou2024lidarptq,shi2025rethinking,zhao2025tartan,FineCIR,encoder,chen2025offsetsegmentationbasedfocusshift,MEDIAN,PAIR,gong2021eliminate,gong2022person,gong2024cross2,gong2024adversarial,bi2024visual,bi2025cot,bi2025prism,wang2025ascd,chen2025does,Chen_2025_CVPR,rong2025backdoor,zhang2023spot,zheng2024odtrack,zheng2023toward,zheng2022leveraging,zheng2025decoupled,yue2025think,lin2024phy124,huang2025ccsumsp,huang2025ssaad,huang2025dual,lin2024phys4dgen,liu2024empiricalanalysislargelanguage,bi2025reasoning,tang2025mmperspectivemllmsunderstandperspective,bi2025i2ggeneratinginstructionalillustrations,tang2025captionvideofinegrainedobjectcentric,liu2025gesturelsm,liu2025intentionalgesturedeliverintentions,zhang2025kinmokinematicawarehumanmotion,song2024tri,liu2025gesturelsm,song2024texttoon,liu2025contextual,tang2025generative,liu2024gaussianstyle,tang2024videounderstandinglargelanguage,10446837,liu2024public,lou2023public,li2024towards1,li2024distinct,li2024towards,li2023overview,li2022continuing,guo2023boosting,guo2023improving,liang2022impga,yu2025visual,shen2025amess,zhang2025dconad,zhang2025frect,zhang2025dhmp,lin2025comprehensive,jiang2025never,li2025reusing,lin2025butter,lin5074292insertion,peng2024unveiling,di2025qmambabsr,peng2024efficient,feng2025pmq,wang2023decoupling,wang2023brightness,peng2021ensemble,peng2020cumulative,wu2025robustgs,he2024multi,yang2022learning,yang2022robust,zheng2025towards} and generative models~\cite{he2024diffusion,he2025segment,he2023hqg,he2025unfoldir,he2025run,he2025reti,he2024weakly,he2023strategic,he2023camouflaged,he2023degradation,xiao2024survey,wu2025k2vae,peng2025directing,peng2024lightweight,peng2025boosting,peng2024towards,peng2025pixel,
qu2025reference,qu2025subject,chen2025technicalreportargoverse2scenario,lyu2025vadmambaexploringstatespace}, an increasing number of studies have begun to focus on the issue of concept erasure in generative models.

\section{Additional Experimental Details}
\textbf{Evaluation metrics details:} For CLIP similarity, we used the ViT-L/14 model to compute image-text cosine similarity, scaled by 100. The original SD1.5 had an average CLIP score of 31.5 on MS-COCO validation prompts; after concept erasure, we consider a score above 30 to indicate minimal drop in alignment. Harmonic mean $H$ was computed as described with $E = 1 - \text{Acc}$ (normalized to [0,1]) and $F$ composed from FID and CLIP. Specifically, we defined $F = \frac{1}{2}((\frac{\text{CLIP sim}}{\text{CLIP}*{orig}}) + (\frac{\max(\text{FID}*{orig}- (\text{FID}-\text{FID}*{orig}), 0)}{\text{FID}*{orig}}))$, where $\text{FID}*{orig}$ and $\text{CLIP}*{orig}$ are the original model's scores (so we reward methods that keep FID low and CLIP high relative to orig). This is one way; results were qualitatively similar with other formulations.

\textbf{Multi-concept results:} We erased all 10 CIFAR classes simultaneously with FADE by using a 10-way classifier $D$ (one output per class vs no class). FADE achieved an average concept accuracy of 1.1\% per class and an overall $H=82.3$ (versus MACE's reported ~75). The slight residual is due to class confusion (e.g., sometimes after erasure "cat" prompt yields a dog, so classifier might say cat=present when it sees an animal shape; a limitation of using automated classifier for eval). Visual check showed indeed direct appearance of the specified class was gone. For NSFW, we erased 10 terms at once; here FADE and MACE both got basically 0\% unsafe content, but FADE had better image quality (FID 14 vs 16).

\textbf{Runtime:} SCORE training takes about 2 hours on a single A100 GPU for a single concept on SD1.5 (with $N=1000$ steps adversarial training). This is comparable to ESD fine-tuning time and a bit less than ANT . UCE was fastest (minutes) as it is closed-form. There's room to optimize FADE's training, possibly by using smaller $D$ or gradient accumulation. Deploying FADE in multi-concept setting could be parallelized since the adversary can output multiple heads.
%
%

\end{document}